\documentclass{article}

\PassOptionsToPackage{numbers}{natbib}
 \usepackage[preprint]{neurips_2026}

\usepackage[utf8]{inputenc}
\usepackage[T1]{fontenc}
\usepackage{hyperref}
\usepackage{url}
\usepackage{booktabs}
\usepackage{amsfonts}
\usepackage{pifont}
\usepackage{nicefrac}
\usepackage{microtype}
\usepackage{algorithm}
\usepackage{algorithmic}
\usepackage{xcolor}
\usepackage{graphicx}
\usepackage{multirow}
\usepackage{colortbl}
\usepackage{titletoc}
\usepackage{enumerate}
\usepackage{enumitem}
\usepackage{subfigure}
\usepackage{wrapfig}
\usepackage{caption}
\usepackage{xurl}

\usepackage{amsmath}
\usepackage{amssymb}
\usepackage{mathtools}
\usepackage{amsthm}

\usepackage[capitalize,noabbrev]{cleveref}

\theoremstyle{plain}
\newtheorem{theorem}{Theorem}[section]

\newtheorem{lemma}[theorem]{Lemma}
\newtheorem{corollary}[theorem]{Corollary}
\theoremstyle{definition}

\theoremstyle{remark}

\usepackage[disable,textsize=tiny]{todonotes}

\setlist[enumerate]{itemsep=2pt, parsep=1pt, topsep=1pt}

\title{Can Subgraph Explanations Be Weaponized to Steal Graph Neural Networks?}

\author{%
  Ojas Nimase \\
  University of Southern California\\
  Los Angeles, USA \\
  \texttt{nimase@usc.edu} \\
  \And
  Jiate Li \\
  University of Southern California\\
  Los Angeles, USA \\
  \texttt{jiateli@usc.edu} \\
  \And
  Yue Zhao \\
  University of Southern California\\
  Los Angeles, USA \\
  \texttt{yue.z@usc.edu} \\
  \And
  Yushun Dong \\
  Florida State University\\
  Tallahassee, USA \\
  \texttt{yushun.dong@fsu.edu} \\
}

\begin{document}

\maketitle

\begin{abstract}
Graph Machine Learning as a Service (GMLaaS) platforms increasingly implement explainability interfaces to meet regulatory transparency requirements. However, this transparency creates exploitable vulnerabilities for model extraction attacks. We present the first model extraction attack specifically designed for graph classification under strict black-box constraints where the attacker observes only discrete class labels and binary explanation masks (no probability scores, gradients, or confidence values). Our method (1) uses model explanation outputs to guide Monte Carlo edge sensitivity estimation toward decision boundaries, with Hoeffding concentration guarantees on estimation accuracy and (2) exploits explanation subgraphs to efficiently narrow the boundary search space. Extensive experiments on benchmark graph datasets across multiple domains demonstrate our method's superiority over comparable baselines. These findings demonstrate that such explainability interfaces create exploitable attack surfaces, informing both defensive mechanisms and policy frameworks for explainable AI mandates. The implementation code is provided in \url{https://github.com/LabRAI/XSTEAL/}. 
\end{abstract}

\section{Introduction}
\label{sec:intro}
Graph Machine Learning as a Service (GMLaaS) allows organizations to distribute access to proprietary graph-based machine learning models through publicly accessible APIs while keeping internal model workings secure~\citep{li2025intellectualpropertygraphbasedmachine}, widely applied in applications like fraud detection~\cite{aws_gnn_fraud_2022} and recommendation systems~\cite{10.1145/3535101, gnnie_2022}. However, this paradigm faces a fundamental security threat: model extraction attacks, which reconstructs functionally similar surrogates to replicate the victim's decision-making behavior by adversarially exploiting API access to query the inner models~\cite{10.5555/3241094.3241142, 10.1145/3595292, maslej2025artificialintelligenceindexreport,ZHOU202057}. This threat has motivated growing study across ML paradigms, spanning attacks, defenses, and ownership verification~\cite{shen2026creditcertifiedownershipverification,Li_Yuan_Shen_Le_Wang_Zhou_Gao_Dong_2026,zhao2025surveymodelextractionattacks}. Compounding this threat, growing regulatory requirements~\cite{euai2024, colorado_sb24205_2024} and user demand~\cite{Rosenbacke2024ExplainableAITrust} are pushing providers to further augment their APIs with explainability interfaces, where users would additionally be aware of the most influencing graph components%(nodes, edges, substructures) most influenced 
on the predicted labels~\cite{ying2019gnnexplainergeneratingexplanationsgraph}. While such transparency mechanisms help satisfy compliance requirements, they also expose a new attack channel that fundamentally threatens the security premise GMLaaS is built upon. This fundamental tension between mandated transparency and model security has been identified as an open challenge for MLaaS~\cite{11416086}. This raises a critical question: can the explanations mandated for transparency be weaponized to steal the same models they explain?

Recent work has begun exploiting this vulnerability, demonstrating that explanations can enable more efficient model extraction attacks~\cite{Oksuz_2024, 10.1145/3665451.3665533, ma2025explanationsleakdecisionlogic}. However, this emerging area remains sparsely studied, especially for graph neural networks. Current studies face three key limitations. \textbf{First}, existing GNN extraction methods lack a principled understanding of which training data is most effective for extraction. Current approaches construct query samples by centering on high-confidence components~\cite{Oksuz_2024, 10.1145/3665451.3665533, ma2025explanationsleakdecisionlogic}, but never formalize why these samples aid extraction or whether better strategies exist. Intuitively, a high-fidelity surrogate must replicate the victim's behavior not only in high-confidence regions but also near decision boundaries where predictions change. Previous work on general (non-graph) model extraction has empirically demonstrated the effectiveness of boundary sampling~\cite{10.1145/3689932.3694756}, yet this insight has not been extended to the GNN domain where the discrete, combinatorial nature of graph data introduces fundamentally different challenges. \textbf{Second}, boundary search techniques developed for continuous domains cannot transfer to discrete graph spaces. Existing methods utilizing boundary points~\cite{10.1145/3689932.3694756, davies2026boundary/b2} require input space continuity to efficiently locate decision boundaries. On graphs, these techniques fundamentally fail: given two graphs $G_1, G_2$ with different predicted labels, there is no unique ``midpoint'' as combinatorially many graphs exist at equal edit distance from both (Figure~\ref{fig:graph_median_visual})~\cite{10.14778/3594512.3594514}. Moreover, naively estimating edge sensitivities over all $O(n^2)$ possible edges incurs prohibitive query costs~\cite{jin2025queryefficientzerothorderalgorithmsnonconvex}, compounded by the hybrid discrete-continuous nature of graph data that prevents direct application of standard optimization techniques~\cite{dai2018adversarialattackgraphstructured, 10.7717/peerj-cs.2095}. \textbf{Third}, every existing GNN extraction method requires access to rich continuous information (probability scores, continuous importance vectors, or gradient-derived signals)~\cite{10.1145/3665451.3665533, Oksuz_2024, ma2025explanationsleakdecisionlogic, shen2021modelstealingattacksinductive, 10.5555/3698900.3699194} that realistic API deployments do not provide. Under emerging regulatory mandates~\cite{euai2024, colorado_sb24205_2024}, GMLaaS providers are increasingly required to furnish explanations alongside predictions. In practice, rational providers would expose the minimum information necessary for compliance (discrete class labels and binary explanation masks) rather than continuous scores or gradients that reveal additional proprietary model internals. No existing GNN extraction method operates under these minimal-information constraints.

\begin{wrapfigure}{l}{0.6\textwidth}
    \centering
    \vspace{-6pt}
    \includegraphics[width=0.59\textwidth]{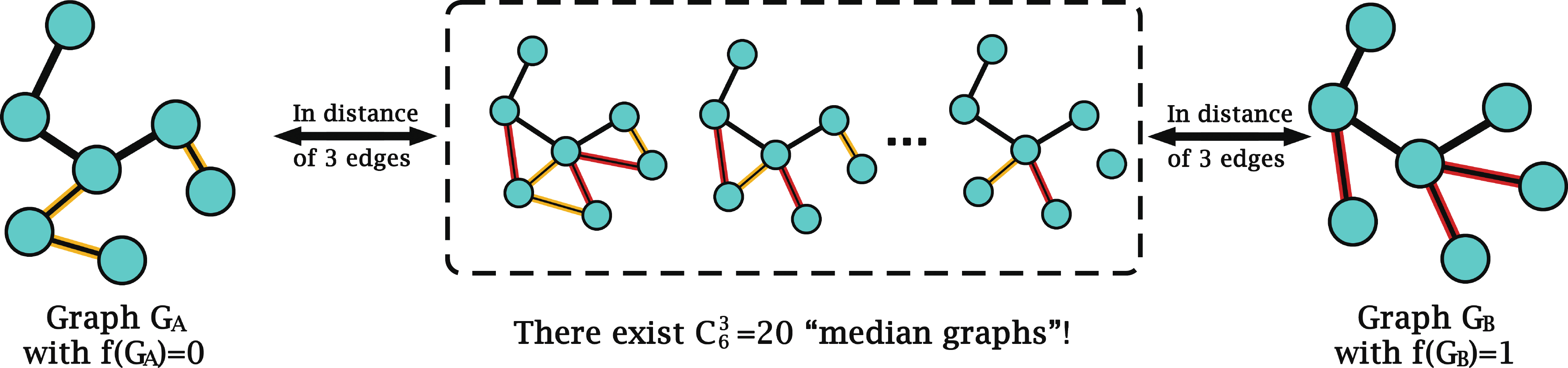}
    \vspace{-3pt}
    \caption{Multiple possible ``midpoint graphs'' at equal edit distance from $G_A$ and $G_B$.}
    \label{fig:graph_median_visual}
    \vspace{-11pt}
\end{wrapfigure}

To address these limitations, we (1) define and investigate the effectiveness of boundary samples for GNN extraction, formalizing the extraction problem as a boundary pair search. (2)~ Propose an explanation-guided search attack that locates such boundary samples under strict black-box constraints where the attacker observes only discrete predictions and binary explanation masks. Our search iteratively estimates edge sensitivities via Monte Carlo sampling from discrete label responses, flips the most sensitive edges, and verifies boundary crossings. We restrict candidates to edges within or adjacent to the explanation subgraph to reduce the search space from $O(|V|^2)$ to $O(|V_{\mathrm{sub}}|^2)$. We focus on graph classification, a setting that presents unique challenges for model extraction and has not been previously addressed under strict black-box constraints. Extensive experiments across molecular, visual, and synthetic graph benchmarks validate our framework. In summary, our contributions are:

\begin{itemize}[leftmargin=*,nosep] 
\item \textbf{Boundary Sample Effectiveness for GNNs}: We provide the first formalized definition of boundary samples for GNN extraction and empirically validate their superiority over random and high-confidence sampling strategies, recasting model extraction as a boundary pair search problem.
\item \textbf{Novel Explanation-Guided Extraction Attack}: Under strict black-box constraints with only discrete predictions and binary explanation masks, we develop the first model extraction attack for graph classification that combines a provably accurate estimator of discrete edge sensitivity with explanation-guided candidate reduction to efficiently generate boundary samples.
\item \textbf{Comprehensive Empirical Evaluation}: We conduct extensive experiments across eight graph datasets spanning molecular, visual, and synthetic domains, three victim architectures, and two explainer types, demonstrating consistent superiority over adapted baselines in surrogate-victim fidelity. 
\end{itemize}

\section{Preliminaries and Motivations}
\paragraph{Graph Neural Networks and Explanations.}
A graph $G = (V, E, X)$ consists of nodes $V$, edges $E \subseteq V \times V$, and node features $X \in \mathbb{R}^{|V| \times d}$. Graph neural networks (GNNs)~\cite{kipf2017semisupervisedclassificationgraphconvolutional, veličković2018graphattentionnetworks, hamilton2018inductiverepresentationlearninglarge} process $G$ via a message passing mechanism to obtain node-level embeddings. In graph classification tasks, the GNN, denoted as $f$, further applies a pooling operation for a graph-level embedding, and maps the embedding to a categorical label: $f: \mathcal{G} \to [C]$, where $[C] = \{1, \ldots, C\}$ denotes the label space.
Under the specific model $f$ and the graph $G$, a GNN explainer $h$ then returns an explanatory subgraph $G_{\mathrm{sub}} = (V_{\mathrm{sub}}, E_{\mathrm{sub}})$ where $V_{\mathrm{sub}} \subseteq V$ and $E_{\mathrm{sub}} \subseteq E$, identifying nodes and edges most influential for the prediction $f(G)$. Representative methods include GNNExplainer~\cite{ying2019gnnexplainergeneratingexplanationsgraph} and PGExplainer~\cite{NEURIPS2020_e37b08dd}.

\begin{figure*}[t]
    \centering
    \includegraphics[width=1\textwidth]{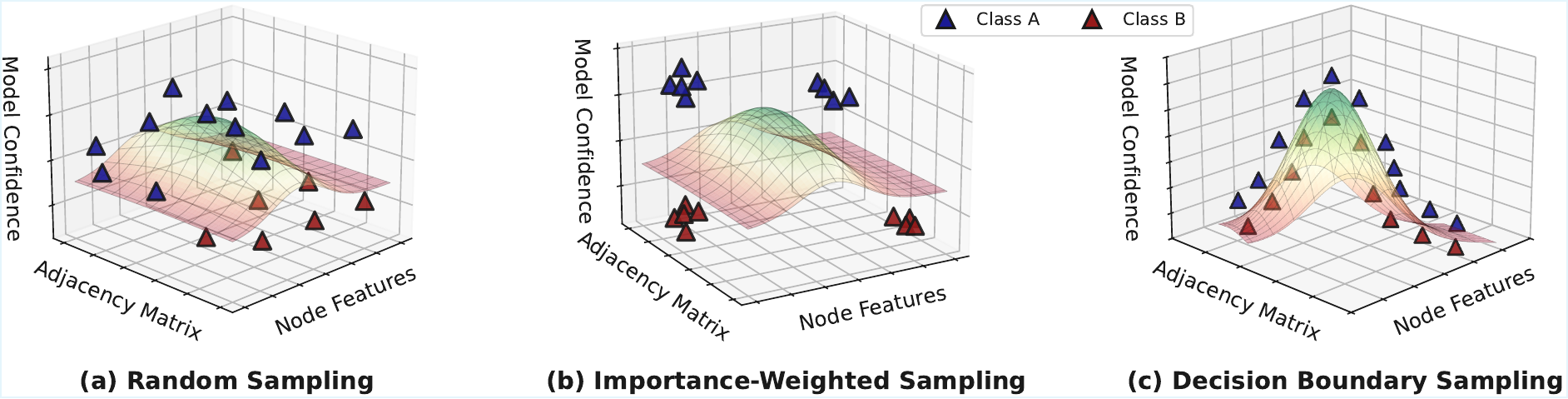}
    \vspace{-10pt}
    \caption{Sampling strategies for model extraction: surfaces are decision boundaries and triangles are classifications with smoothing for visualization clarity. Existing methods are (a) or (b), focusing on randomly sampling or sampling areas with high confidence. Our strategies are (a) and (c).}
    \label{fig:methodology_overview}
\end{figure*}

\paragraph{GNN Boundary Sampling Motivation.}
\label{subsec:motivation}
To better address the existing gap on extraction problem formulating and answer the question we raised in Section~\ref{sec:intro}: "What kind of sampling data is effective for extracting a GNN?", we investigate the effectiveness of boundary sampling on GNN extraction to solidify our motivation beforehand. We conduct a controlled experiment isolating the effect of boundary proximity on extraction fidelity across five binary benchmark datasets (AIDS, MUTAG, NCI1, PTC\_FM, Tox21\_AhR; dataset statistics are provided in Appendix~\ref{app:datasets}). Full experimental details are provided in Appendix~\ref{app:boundary-motivation}.

For each graph $G$ in a shadow dataset, we exhaustively test all single-edge toggles (adding or removing an edge) to identify \emph{boundary pairs} $(G, G')$ where $f(G) \neq f(G')$, and classify remaining graphs as \emph{non-boundary samples}. We compare four training strategies on equal-sized training sets: (1) Boundary: trained exclusively on boundary pair samples, (2) Non-boundary: trained on non-boundary samples only, (3) Hybrid: trained on 50\% boundary pairs + 50\% random samples from the full shadow set, and (4) Shadow: trained only on random samples from the full shadow set.

\begin{wrapfigure}{r}{0.5\textwidth}
    \centering
    \vspace{-12pt}
    \includegraphics[width=0.49\textwidth]{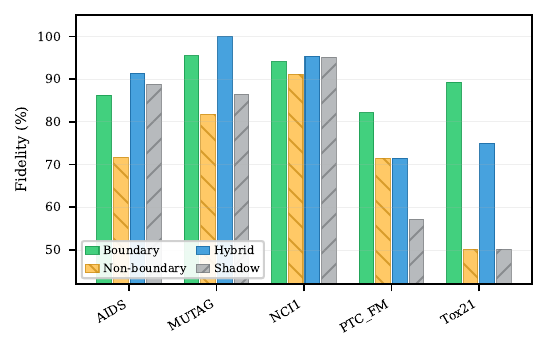}
    \vspace{-5pt}
    \caption{Fidelity (\%) by sampling strategy on equal-sized training sets. Full results in Appendix Table~\ref{tab:motivation-results}.}
    \label{fig:motivation-bar}
    \vspace{-24pt}
\end{wrapfigure}

Across all five datasets, surrogates with boundary data consistently outperform those without (Figure~\ref{fig:motivation-bar}), confirming that decision boundary information disproportionately aids model extraction. We provide full experimental details and a theoretical analysis in Appendix~\ref{app:boundary-motivation}.

\paragraph{Problem Formulation.} In this work we consider a \emph{strict black-box} model extraction attack against graph neural networks. Let $f: \mathcal{G} \to [C]$ denote a victim GNN classifier that maps graphs to discrete class labels $[C] = \{1, \ldots, C\}$. The attacker has query access only: given any graph $G$, the attacker can observe the predicted label $y(G) = f(G) \in [C]$, but cannot access probability scores, logits, confidence values, or internal model parameters. This strict assumption reflects realistic deployment scenarios where APIs return only classification decisions.

In general model extraction attack, the attacker's goal is to train a surrogate model $g$ replicating the decision behavior of $f$, which is however uneasy to be formalized in black-box setting. Motivated by the stated effectiveness of GNN boundary samples, we instead define our problem as constructing a query set $\bar{\mathcal{G}}$ consisting of \emph{boundary pairs}: pairs of similar graphs that receive different predictions. Formally, we aim to design an effective algorithm $M$ to find as much boundary pairs $(G, G')$ as possible, and replicate a model $g$ trained on the pairs to rebuild the same decision boundary:
\begin{equation}
\begin{aligned}
M(Q,f,h,\delta) &= \arg\max_{M} |\bar{\mathcal{G}}|, \bar{\mathcal{G}}=
\{(G,G'): f(G) \neq f(G')\wedge \operatorname{dis}(G, G') \leq \delta\}\\
g&=\arg \max_{g}\sum_{(G,G')\in\bar{\mathcal{G}}}\mathbb{I}(g(G)=f(G))+\mathbb{I}(g(G')=f(G'))
\label{eq:boundary-constraint}
\end{aligned}
\end{equation}
$Q$ is a budget that the adversary can afford to query on the victim model. $h$ is a GNN explainer on the holder side that returns a discrete explanatory subgraph $G_{\mathrm{sub}} = (V_{\mathrm{sub}}, E_{\mathrm{sub}})$ identifying nodes and edges important for the prediction $f(G)$. $\delta > 0$ is a distance threshold and $\operatorname{dis}(\cdot, \cdot)$ measures graph similarity, and the distance function is defined as:
$\operatorname{dis}(G, G') = \|A - A'\|_0$, i.e., the number of differing entries in the adjacency matrices. We focus on structural perturbation in this work, while our later experimental evaluation empirically demonstrates that edge flips alone are sufficient for constructing our boundary search. Feature perturbation can be incorporated if needed for feature-dominated tasks. It is worth noting that a smaller $\delta$ yields boundary pairs that more precisely characterize the local decision boundary geometry of $f$: a queried boundary pair forces every surviving hypothesis to place its decision boundary within an $O(\delta)$ neighborhood of the pair, as we prove in Lemma~\ref{lem:margin} (Appendix~\ref{sec:boundary-proof}).

\section{XSTEAL Framework}
\label{subsec:methodology}
In this section we describe how to construct our method to search boundary pairs $(G, G')$ satisfying the constraint in Eq.~(1) using only black-box queries to the victim GNN $f$ and its coupled explainer $h$. To begin with, we will firstly introduce up a warm-up algorithm that serves as the background of our designed method, and briefly introduce the arising challenges along with our corresponding technique solutions as an overview. Afterwards, we will formalize these components in details.

\paragraph{Warm-Up.} We notice that in convex machine learning on continuous data, it's easy to construct a binary search method: firstly, sampling two data $x$ and $x'$ with different prediction results; secondly, choose the middle data point $\frac{x+x'}{2}$ and make prediction on it; thirdly, replace $x$/$x'$ with the middle point that holds the same prediction, and check if the distance of the new point pair satisfies $\delta$. If not, we repeat from the first step on the new pair until we reach a satisfied boundary pair. 

\paragraph{Challenges and Solutions.} This simple binary search is efficient and ideal, however, facing a primary challenge when practically fitting into our problem: \textbf{(C1)} in graph learning domain, even we assume the GNN model is simple as a convex model, the discrete nature of the graph data makes it impossible to find proper middle point efficiently. To address this primary challenge, we propose \textbf{(S1)} a boundary search procedure that optimizes a sample data towards decision boundary guided by the data's local gradients. However, this requirement on gradients raised another challenge, that \textbf{(C2)} in our black-box extraction attack setting we are unable to obtain the local gradients on the victim model directly. Therefore, we accordingly propose \textbf{(S2)} a query-based Monte Carlo estimator on edge sensitivities, which approximates the local gradients by summarizing query results on the neighboring data. While this query-based estimation is provably accurate as an estimator of edge sensitivity, nevertheless, it requires certain cost on queries and calculations. \textbf{(C3)} Noticing in a graph data there are $\mathcal{O}(|V|^{2})$ edges, including existent and nonexistent ones, waiting to be estimated, the query burden in total makes the estimation extremely inefficient. To further address this efficiency challenge, we refine our method with \textbf{(S3)} an explanation-guided reduction on candidate edges for query efficiency. 

\subsection{Discrete Edge Sensitivity Estimation from Label Queries}
\label{subsubsec:grad-estimation}

\begin{figure*}[t]
    \centering
    \includegraphics[width=0.99\textwidth]{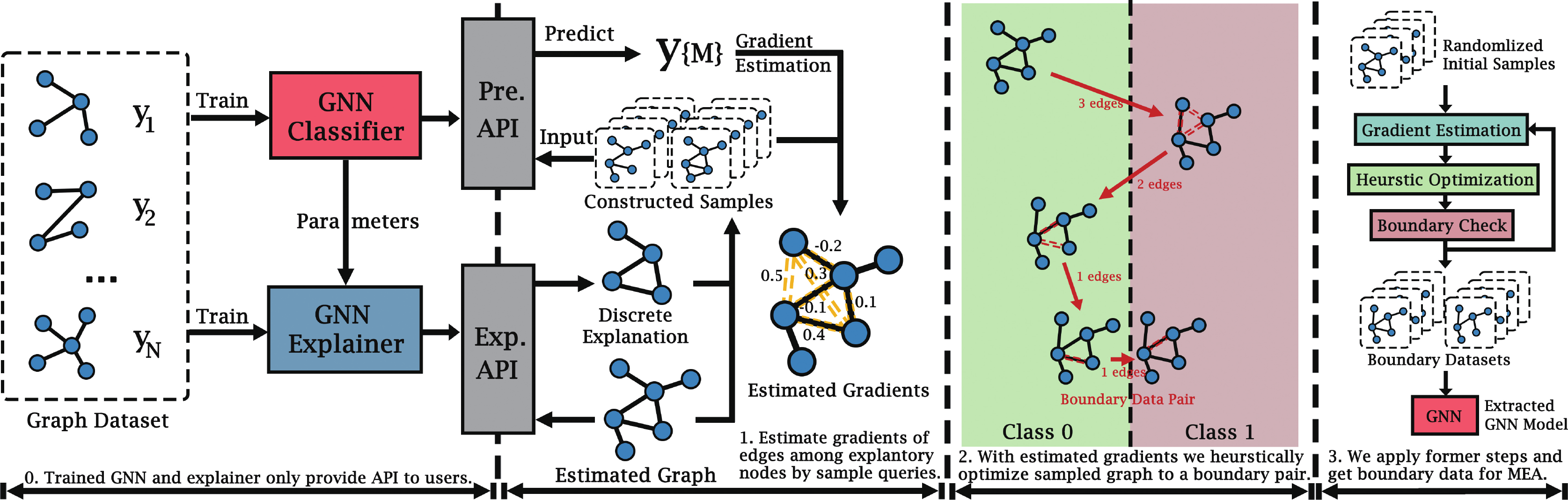}
    \caption{A visual depicting the setup of MEA and the pipeline of our proposed attack.}
    \label{fig:MAE_setup}
\end{figure*}

First, we introduce our estimator of edge sensitivity. Fix a base graph $G$ and let $G^{(t)}$ denote the current graph at iteration $t$ of our search procedure. We write $G^{(t)} = (A^{(t)}, X^{(t)})$ and $y^{(t)} \coloneqq y\bigl(G^{(t)}\bigr)$. Let $y^{(0)} = y(G)$ denote the original prediction of the unperturbed graph.

\paragraph{Black-box constraint.}
In contrast to white-box attacks that compute $\nabla_A \mathcal{L}$ directly, we have access only to the discrete prediction $y(\cdot) \in [C]$. We cannot observe probabilities, gradients, logits, or loss values. The same constraint applies to the explanation: the provider runs the explainer $h$ white-box on its own model to produce the mask, but the attacker receives only the resulting discrete node/edge set with no access to $h$'s gradients, importance scores, or internal parameters. This \textit{strict black-box} assumption reflects realistic deployment scenarios where APIs return only class labels and their accompanying discrete explanation masks.

\paragraph{Estimator of edge sensitivity.}
To estimate the sensitivity of edge $e$ without access to continuous outputs, we employ a Monte Carlo sampling approach. The key insight is that an edge's contribution to pushing the prediction away from $y^{(0)}$ can be estimated by observing how often the prediction changes when that edge is present versus absent across random perturbations of the graph.

For each candidate edge $e = (i,j) \in \mathcal{C}(G^{(t)})$, we define the graph obtained by flipping that edge:
\begin{equation}
    G^{(t)} \oplus e \;=\; \bigl(A^{(t)} \oplus e, X^{(t)}\bigr),
    \label{eq:edge-flip}
\end{equation}
where $A^{(t)} \oplus e$ is obtained from $A^{(t)}$ by setting $A^{(t)}_{ij} \gets 1 - A^{(t)}_{ij}$ and $A^{(t)}_{ji} \gets 1 - A^{(t)}_{ji}$.

For the $m$-th Monte Carlo sample, we generate a random perturbation $\xi_m$ applied to edges other than $e$. Let $\tilde{G}^{(t)}_m$ denote the perturbed graph with edge $e$ in its original state, and $\tilde{G}^{(t)}_m \oplus e$ denote the same perturbation but with edge $e$ flipped. We define the binary observation:
\begin{equation}
    Z_{e,m}^{(t)}
    \;=\;
    \mathbf{1}\bigl[y(\tilde{G}^{(t)}_m \oplus e) \neq y^{(0)}\bigr]
    - \mathbf{1}\bigl[y(\tilde{G}^{(t)}_m) \neq y^{(0)}\bigr],
    \label{eq:mc-observation}
\end{equation}
where $\mathbf{1}[\cdot]$ is the indicator function. Each $Z_{e,m}^{(t)} \in \{-1, 0, +1\}$ measures whether flipping edge $e$ pushes the prediction away from the original label ($+1$), back toward it ($-1$), or has no effect ($0$), under perturbation $\xi_m$. This class-agnostic formulation naturally handles both binary and multi-class settings: for binary classification it is equivalent to targeting the opposite class, while for multi-class it rewards any boundary crossing without privileging a specific target class. This spreads search across all $\binom{C}{2}$ pairwise boundaries which is the fragmentation effect analyzed in Section~\ref{subsec:main-results} (RQ3), for which targeted per-transition search is the natural refinement.

We estimate the edge sensitivity by the empirical mean over $n_e$ samples:
\begin{equation}
    \hat{g}_e^{(t)}
    \;=\;
    \frac{1}{n_e} \sum_{m=1}^{n_e} Z_{e,m}^{(t)}.
    \label{eq:empirical-sensitivity}
\end{equation}

A positive $\hat{g}_e^{(t)}$ indicates that flipping edge $e$ tends to push the prediction away from $y^{(0)}$ toward a decision boundary, while a negative value suggests it reinforces the current prediction. The magnitude reflects the strength of this effect.

\paragraph{Perturbation scheme.}
Each perturbation $\xi_m$ is generated by independently flipping each edge $(i',j') \neq e$ with probability $p_{\mathrm{flip}} \in (0, 0.5)$. This creates diverse graph neighborhoods around $G^{(t)}$ while preserving the overall structure. When $p_{\mathrm{flip}} = 0$, we recover the deterministic finite-difference estimate. Larger values are more robust to local decision boundary geometry, however at the cost of increased variance.

\paragraph{Concentration guarantees.}
Since each $Z_{e,m}^{(t)} \in [-1, +1]$, Hoeffding's inequality provides a concentration bound for our estimator.

\begin{theorem}[Hoeffding bound for edge sensitivities]
\label{lem:hoeffding}
Fix an iteration $t$ and an edge $e \in \mathcal{C}(G^{(t)})$. Let $g_e^{(t)} = \mathbb{E}[Z_{e,m}^{(t)}]$ denote the true expected sensitivity and $\hat{g}_e^{(t)}$ its empirical estimate. Then for any $\epsilon > 0$,
\begin{equation}
    \Pr\Bigl( \bigl|\hat{g}_e^{(t)} - g_e^{(t)}\bigr| \ge \epsilon \Bigr)
    \;\le\; 2 \exp\Bigl( -\frac{n_e \epsilon^2}{2} \Bigr).
    \label{eq:hoeffding}
\end{equation}
Equivalently, for any confidence level $1-s \in (0,1)$, with probability at least $1-s$:
\begin{equation}
    \bigl|\hat{g}_e^{(t)} - g_e^{(t)}\bigr|
    \;\le\;
    \sqrt{\frac{2 \log(2/s)}{n_e}}.
    \label{eq:hoeffding-confidence}
\end{equation}
\end{theorem}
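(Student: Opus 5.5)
The plan is to apply the standard two-sided Hoeffding inequality to the i.i.d. sample $Z_{e,1}^{(t)},\dots,Z_{e,n_e}^{(t)}$. We condition on the current graph $G^{(t)}$ and the fixed edge $e$. Everything that determines $G^{(t)}$ (the previous iterations' random draws) is then fixed, and the only remaining randomness is the perturbations $\xi_1,\dots,\xi_{n_e}$.

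\textbf{Step 1: independence and boundedness.} By the perturbation scheme, each $\xi_m$ independently flips every edge $(i',j')\neq e$ with probability $p_{\mathrm{flip}}$, and the draws for different $m$ are independent. Each $Z_{e,m}^{(t)}$ is a deterministic function of $\xi_m$, since the victim $f$ is a fixed deterministic map and $y^{(0)}$ is fixed. Hence the $Z_{e,m}^{(t)}$ are i.i.d. with common mean $g_e^{(t)}$. By \eqref{eq:mc-observation}, each $Z_{e,m}^{(t)}$ is a difference of two indicators, so it lies in $[a,b]=[-1,1]$ and $(b-a)^2=4$. This step needs care: if the same perturbation draws were shared across iterations, one must condition on $G^{(t)}$ so that the draws used at iteration $t$ are fresh.

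\textbf{Step 2: apply Hoeffding.} For i.i.d. variables in $[a,b]$, Hoeffding gives
\[
\Pr\bigl(|\hat g_e^{(t)}-g_e^{(t)}|\ge\epsilon\bigr)\le 2\exp\bigl(-2n_e\epsilon^2/(b-a)^2\bigr)=2\exp\bigl(-n_e\epsilon^2/2\bigr),
\]
which is \eqref{eq:hoeffding}. Since the conditional bound does not depend on $G^{(t)}$, taking expectation over $G^{(t)}$ yields the unconditional statement.

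\textbf{Step 3: invert to get the confidence form.} Set $2\exp(-n_e\epsilon^2/2)=s$, which gives $\epsilon=\sqrt{2\log(2/s)/n_e}$. With probability at least $1-s$, the deviation is then below this $\epsilon$, giving \eqref{eq:hoeffding-confidence}.

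The only real obstacle is the independence and conditioning point in Step 1; the rest is routine.
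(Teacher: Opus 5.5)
Your proof is correct and takes essentially the same route as the paper's sketch. Both note that the $Z_{e,m}^{(t)}$ are independent and lie in $[-1,1]$, substitute this range into Hoeffding's inequality to get the exponent $-2n_e\epsilon^2/4=-n_e\epsilon^2/2$, and then invert to obtain the confidence form; your explicit conditioning on $G^{(t)}$, which makes $g_e^{(t)}$ the conditional mean over the fresh perturbations $\xi_1,\dots,\xi_{n_e}$, is a detail the paper leaves implicit, and you handle it correctly.
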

\begin{proof}(Sketch) The binary observations $Z_{e}^{t}$ we made under the Monte Carlo sampling are independent and all satisfy $Z_{e,m}^{t}\in [-1,1],\forall m\in[1,n_{e}]$. Therefore by substituting the range $[-1,1]$ and $n_{e}$ into Hoeffding inequality~\cite{hoeffiding}, we could directly reach above equations.
\end{proof}
The estimation error decays as $O\!\left(\sqrt{\tfrac{1}{n_e} \log \tfrac{1}{s}}\right)$: a per-estimate guarantee on the error of a single edge sensitivity at a given iteration using only discrete predictions rather than a uniform bound over the search or a convergence result for what remains a heuristic procedure. Because the stopping criterion in Eq.~\eqref{eq:boundary-stop} is verified directly against the victim, every recorded boundary pair holds regardless of estimate quality. Moreover, the search uses $\hat{g}_e^{(t)}$ only to rank candidates (Eq.~\eqref{eq:topk-edges}) so it needs correct ordering, not accurate magnitudes. Corollary~\ref{cor:selection} (Appendix~\ref{app:selection}) sharpens the bound above into a selection guarantee whose sample cost scales with the inverse squared sensitivity gap.

\subsection{Boundary Search via Estimated Edge Sensitivities}
\label{subsubsec:boundary-search}

Using the estimated sensitivities $\{\hat{g}_e^{(t)}\}_{e \in \mathcal{C}(G^{(t)})}$, we perform a discrete, gradient-guided search in graph space to locate decision boundaries. Since we aim to push the prediction away from the original label $y^{(0)}$, we select the $k$ edges with the largest positive estimated sensitivities:
\begin{equation}
    S^{(t)}
    \;=\;
    \operatorname{TopK}_{\max}\Bigl(\bigl\{\hat{g}_e^{(t)} : e \in \mathcal{C}(G^{(t)})\bigr\}, k\Bigr),
    \label{eq:topk-edges}
\end{equation}
where $S^{(t)} \subseteq \mathcal{C}(G^{(t)})$ contains the $k$ edges most likely to push the prediction across a decision boundary. We then update the graph by flipping all edges in $S^{(t)}$: $G^{(t+1)} = \operatorname{Update}(G^{(t)}, S^{(t)})$, where $\operatorname{Update}$ applies the flip operation in Eq.~\eqref{eq:edge-flip} to each $e \in S^{(t)}$. Starting from $G^{(0)} = G$, we iterate until reaching an iteration $t^\star$ such that
\begin{equation}
    y\bigl(G^{(t^\star)}\bigr) \neq y^{(0)}
    \quad \text{and} \quad
    \delta_e\bigl(G^{(0)}, G^{(t^\star)}\bigr) \le \delta.
    \label{eq:boundary-stop}
\end{equation}
At this point, we record the boundary pair $(G^{(0)}, G^{(t^\star)})$. The search (Algorithm~\ref{alg:boundary-search}) exits on exactly one of four conditions: \textbf{(i)} \textit{success}, when $y(G^{(t)}) \neq y^{(0)}$ within the edit budget $\delta$; \textbf{(ii)} \textit{edit-budget exhaustion}, when $\delta_e(G^{(0)}, G^{(t)}) \ge \delta$; \textbf{(iii)} \textit{stagnation}, when no candidate edge has positive estimated sensitivity ($S^{(t)} = \emptyset$); and \textbf{(iv)} the \textit{iteration cap} $t = T_{\max}$. Condition (iv) alone bounds the search at $T_G \le T_{\max} = 10$ iterations unconditionally, regardless of $\delta$, the explainer, or the victim, so a stopping iteration $t^\star$ always exists and the procedure always halts. It does not always halt successfully: exits (ii) and (iii) abandon the graph by design since Phase~2 is a per-graph budgeted attempt and the attacker proceeds to the next shadow graph.

\subsection{Explanation-Guided Query Reduction}
\label{subsubsec:explain-guided}

Naively, $\mathcal{C}(G^{(t)})$ could contain all possible edges, leading to $O(|V|^2)$ queries per iteration. To make the attack query-efficient, we leverage the explanatory subgraph produced by $h$ to restrict attention to a smaller, high-impact region of the graph.

Given $G^{(t)}$, we query the explainer to obtain
\[
G_{\mathrm{sub}}^{(t)} \;=\; G_{\mathrm{sub}}\bigl(G^{(t)}\bigr)
\;=\; \bigl(V_{\mathrm{sub}}^{(t)}, E_{\mathrm{sub}}^{(t)}, X_{\mathrm{sub}}^{(t)}\bigr).
\]
We define the one-hop neighborhood of the explanatory nodes as
\[
N\bigl(V_{\mathrm{sub}}^{(t)}\bigr)
\;=\;
\bigl\{ j \in V(G^{(t)}) : \exists\, i \in V_{\mathrm{sub}}^{(t)} \text{ with } (i,j) \in E(G^{(t)}) \bigr\}.
\]

We construct the candidate edge set as
{\small
\begin{equation}
\begin{split}
    \mathcal{C}\bigl(G^{(t)}\bigr)
    =&
    \underbrace{\bigl\{(i,j) : i,j \in V_{\mathrm{sub}}^{(t)},\, i < j\bigr\}}_{\text{within-explanation edges}} \cup\
    \underbrace{\bigl\{(i,j) : i \in V_{\mathrm{sub}}^{(t)},\, j \in N(V_{\mathrm{sub}}^{(t)}),\, i < j\bigr\}}_{\text{explanation-to-neighbor edges}}\cup
    \underbrace{\mathcal{S}_{\mathrm{carry}}^{(t)}}_{\text{carry-over edges}}
    \cup
    \underbrace{\mathcal{S}_{\mathrm{rand}}^{(t)}}_{\text{random exploration}}.
\end{split}
\label{eq:explain-candidates}
\end{equation}
}
Here $\mathcal{S}_{\mathrm{carry}}^{(t)}$ contains edges that were highly influential in previous iterations, and $\mathcal{S}_{\mathrm{rand}}^{(t)}$ contains randomly sampled edges for global exploration.

Since typical explanations satisfy $|V_{\mathrm{sub}}^{(t)}| \ll n$, the candidate set size is
\[
|\mathcal{C}(G^{(t)})|
\;=\; O\bigl(|V_{\mathrm{sub}}^{(t)}|^2
  + |V_{\mathrm{sub}}^{(t)}| \cdot d_{\mathrm{avg}}
  + |\mathcal{S}_{\mathrm{carry}}^{(t)}|
  + |\mathcal{S}_{\mathrm{rand}}^{(t)}|\bigr),
\]
where $d_{\mathrm{avg}}$ is the average degree. This reduces victim queries per iteration from $O(|V|^2 \cdot n_e)$ to $O(|\mathcal{C}(G^{(t)})| \cdot n_e)$, which largely increases our method's efficiency.

To illustrate concretely, consider a typical molecular graph with $n=18$ nodes and an explanation subgraph of $|V_{\mathrm{sub}}|=5$ nodes with average degree $d_{\mathrm{avg}}=3$. The candidate set contains $\binom{5}{2} + 5 \cdot 3 + 5 + 10 = 40$ edges versus $\binom{18}{2} = 153$ possible edges, a $3.8$ reduction. With $n_e=5$ MC samples, Phase~2 requires $\sim$400 queries per graph per iteration versus $\sim$1530 naively and in practice, most graphs resolve within $T \leq 3$ iterations.

\subsection{Summary: Method Overview}
\label{subsubsec:two-phase}
With all above components we design, we are able to summarize our whole method in the following overview. Let $\bar{\mathcal{G}}$ denote the set of boundary pairs obtained by applying the two-phase search procedure to all initial graphs:
\begin{equation}
    \bar{\mathcal{G}}
    \;=\;
    \bigl\{ (G, G') : y(G) \neq y(G'),\ \delta_e(G, G') \le \delta \bigr\}.
    \label{eq:boundary-set}
\end{equation}
We construct a training set for the surrogate model $g$:
\begin{equation}
    \mathcal{D}_{\mathrm{train}}
    \;=\;
    \bigl\{ (\bar{G}, y(\bar{G})) : \forall \bar{G} \in (G, G'),\ (G,G') \in \bar{\mathcal{G}} \bigr\}.
    \label{eq:train-set}
\end{equation}
The surrogate is trained via
\begin{equation}
    \min_{g \in \mathcal{H}}
    \sum_{(G,y) \in \mathcal{D}_{\mathrm{train}}}
    \ell\bigl(g(G), y\bigr),
    \label{eq:surrogate-training}
\end{equation}
where $\mathcal{H}$ denotes the chosen GNN hypothesis class. We employ a two-phase approach. \textbf{Phase~1} cheaply tests all single-edge removals ($O(|E(G)|)$ queries): if any removal changes the prediction we record the boundary pair, else the graph passes to Phase~2. Restricting to removals rather than also adding edges trades some yield for this lower cost, most where boundaries are scarce (Appendix~\ref{app:boundary-empirical}, Table~\ref{tab:toggle-deletion}); Phase~2 recovers the difference. \textbf{Phase~2} triggers only when Phase~1 yields insufficient pairs, applying the full MC boundary search (Sections~\ref{subsubsec:grad-estimation}--\ref{subsubsec:explain-guided}) to remaining graphs. This design reflects that \emph{predictions find easy boundaries; explanations unlock hard ones}. Phase~1 is quantity-limited on low boundary-rate datasets and quality-limited on feature-dominated datasets; our phase ablation (Appendix~\ref{sec:phase-ablation}) confirms both effects. The complete algorithm is provided in Appendix~\ref{app:algorithms} (Appendix Algorithm~\ref{alg:extraction} for the overall procedure, Appendix Algorithm~\ref{alg:boundary-search} for the \textsc{BoundarySearch} subroutine).

\section{Experiments}
We evaluate our method across eight graph classification benchmarks (Table~\ref{tab:datasets} in Appendix~\ref{app:datasets}), three victim architectures (GCN, GAT, GraphSAGE), and two explainer types (GNNExplainer, PGExplainer). We organize our evaluation around three research questions:

\begin{itemize}[leftmargin=*,nosep]
\item \textbf{RQ1}: Does boundary sampling improve extraction fidelity over random and high-confidence strategies?
\item \textbf{RQ2}: Does our method generalize across victim architectures and explainer types? (Appendix~\ref{app:gat-results},~\ref{app:sage-results})
\item \textbf{RQ3}: How do individual components (explanation guidance, MC estimation, two-phase design) contribute to performance? (Appendix~\ref{app:matched-noexp},~\ref{app:component-ablations},~\ref{sec:phase-ablation})
\end{itemize}

\subsection{Experimental Setup}

We report fidelity (surrogate-victim agreement on a class-balanced test set) and accuracy (surrogate vs.\ ground truth). All surrogates use the same architecture family as the victim with fixed hyperparameters to isolate the effect of training data quality. Full details are in Appendix~\ref{app:hyperparameters}.

\paragraph{Datasets.}
We evaluate on eight graph classification benchmarks spanning molecular (AIDS, MUTAG, NCI1, PTC\_FM, Tox21\_AhR), visual (MNIST, Letter-low), and synthetic (Synthie) domains. Dataset sizes range from 188 to 70{,}000 graphs, with five binary and three multi-class tasks (up to 15 classes), testing our method across diverse scales and decision boundary complexities. Full dataset statistics are provided in Table~\ref{tab:datasets} (Appendix~\ref{app:datasets}).

\paragraph{Evaluation Metrics.}
We report fidelity, defined as the agreement rate between surrogate and victim predictions on a class-balanced test set. Fidelity is our primary metric because it measures how faithfully the surrogate replicates the victim's decision logic, independent of the victim's own correctness. We additionally report accuracy (surrogate vs.\ ground-truth labels) as a secondary measure. Details on test set balancing are in Appendix~\ref{app:metrics}.

\paragraph{Baselines.} Existing GNN extraction attacks require either continuous victim outputs (probability scores, embeddings, or importance vectors)~\cite{shen2021modelstealingattacksinductive, podhajski2024efficientmodelstealingattacksinductive, ma2025explanationsleakdecisionlogic, 10.1145/3665451.3665533} or operate on node classification over a shared graph~\cite{wu2021modelextractionattacksgraph, 10.5555/3698900.3699194, GUAN2024112144}. Since no existing method operates under our discrete-only graph classification constraints, we construct baselines from straightforward strategies and adapt EGSteal~\cite{ma2025explanationsleakdecisionlogic}, which targets the same task but under a white-box setting with continuous outputs. Our adaptation retains only its style-invariance augmentation, as the core explanation alignment mechanism requires continuous importance vectors that are unavailable in our setting. Full details are in Appendix~\ref{app:baselines}:
\begin{itemize}[leftmargin=*,nosep]
\item \textbf{Shadow-Only}: Random shadow samples labeled by the victim, with no explanation access or boundary-seeking strategy.
\item \textbf{Non-boundary}: Shadow graphs where no single-edge removal changes the victim's prediction, representing interior-region samples far from decision boundaries.
\item \textbf{EGSteal-BB}: Adaptation of EGSteal's style-invariance augmentation to discrete explanation masks; perturbs edges outside the explanation subgraph while preserving the original label.
\end{itemize}
We additionally report a budget-matched \textbf{No-Exp} control that replaces explanation-guided candidate selection with uniform random selection at equal candidate count and query budget, isolating the explanation signal from raw computation (Appendix~\ref{app:matched-noexp}).

\paragraph{Our Methods.} We evaluate two variants of our boundary-based extraction framework:
\begin{itemize}[leftmargin=*,nosep]
\item \textbf{Boundary}: Full two-phase method (Section~\ref{subsubsec:two-phase}) combining cheap exhaustive $\delta_e=1$ search with explanation-guided MC boundary search.
\item \textbf{Hybrid}: Allocates 50\% of the query budget to boundary pair generation and 50\% to random sampling, combining targeted boundary information with broad distributional coverage.
\end{itemize}

\subsection{Main Results} 
\label{subsec:main-results}

\begin{table*}[t]
\centering
\resizebox{\textwidth}{!}{%
\begin{tabular}{@{}ll|cc|cc|cc|cc|cc@{}}
\toprule
& & \multicolumn{2}{c|}{\textbf{Shadow}} & \multicolumn{2}{c|}{\textbf{Non-boundary}} & \multicolumn{2}{c|}{\textbf{EGSteal-BB}} & \multicolumn{2}{c|}{\textbf{Boundary (XSTEAL)}} & \multicolumn{2}{c}{\textbf{Hybrid (XSTEAL)}} \\
\textbf{Dataset} & \textbf{Expl.} & Fid. & Acc. & Fid. & Acc. & Fid. & Acc. & Fid. & Acc. & Fid. & Acc. \\
\midrule
\multirow{2}{*}{AIDS}
  & GNNExp & \multirow{2}{*}{86.8$\pm$2.3} & \multirow{2}{*}{72.1$\pm$2.3} & \multirow{2}{*}{71.3$\pm$0.8} & \multirow{2}{*}{66.4$\pm$0.0} & 65.8$\pm$1.1 & 67.2$\pm$0.7 & \textbf{91.1$\pm$0.8} & 75.9$\pm$0.0 & 85.1$\pm$3.5 & 71.0$\pm$3.9 \\
  & PGExp  &  &  &  &  & 68.4$\pm$0.4 & 70.4$\pm$0.4 & 83.6$\pm$7.4 & 70.7$\pm$6.0 & 88.8$\pm$1.9 & 74.1$\pm$1.9 \\
\midrule
\multirow{2}{*}{Letter-low}
  & GNNExp & \multirow{2}{*}{68.6$\pm$2.1} & \multirow{2}{*}{69.6$\pm$1.8} & \multirow{2}{*}{44.4$\pm$1.6} & \multirow{2}{*}{43.0$\pm$1.0} & 68.9$\pm$1.6 & 68.4$\pm$1.9 & 63.0$\pm$3.7 & 65.7$\pm$2.7 & \textbf{72.3$\pm$0.7} & 72.3$\pm$0.7 \\
  & PGExp  &  &  &  &  & 66.4$\pm$3.9 & 66.9$\pm$5.5 & 69.9$\pm$3.7 & 71.6$\pm$4.9 & 69.4$\pm$1.8 & 69.1$\pm$1.7 \\
\midrule
\multirow{2}{*}{MNIST}
  & GNNExp & \multirow{2}{*}{53.4$\pm$1.3} & \multirow{2}{*}{26.4$\pm$0.8} & \multirow{2}{*}{48.3$\pm$2.5} & \multirow{2}{*}{24.4$\pm$0.4} & 53.8$\pm$1.7 & 26.5$\pm$0.6 & 54.9$\pm$2.3 & 26.2$\pm$0.6 & \textbf{55.1$\pm$1.7} & 26.6$\pm$0.4 \\
  & PGExp  &  &  &  &  & 50.2$\pm$0.8 & 25.1$\pm$0.3 & 52.0$\pm$2.6 & 25.7$\pm$0.8 & 53.1$\pm$3.1 & 25.9$\pm$0.7 \\
\midrule
\multirow{2}{*}{MUTAG}
  & GNNExp & \multirow{2}{*}{81.8$\pm$3.7} & \multirow{2}{*}{68.2$\pm$3.7} & \multirow{2}{*}{84.8$\pm$2.1} & \multirow{2}{*}{65.2$\pm$2.1} & 90.9$\pm$0.0 & 68.2$\pm$0.0 & \textbf{100.0$\pm$0.0} & 77.3$\pm$0.0 & 97.0$\pm$2.1 & 74.2$\pm$2.1 \\
  & PGExp  &  &  &  &  & 90.9$\pm$0.0 & 68.2$\pm$0.0 & \textbf{100.0$\pm$0.0} & 77.3$\pm$0.0 & 95.5$\pm$0.0 & 72.7$\pm$0.0 \\
\midrule
\multirow{2}{*}{PTC\_FM}
  & GNNExp & \multirow{2}{*}{77.4$\pm$2.1} & \multirow{2}{*}{54.8$\pm$2.1} & \multirow{2}{*}{75.0$\pm$0.0} & \multirow{2}{*}{50.0$\pm$0.0} & 59.5$\pm$2.1 & 44.1$\pm$2.1 & 85.7$\pm$3.6 & 53.6$\pm$3.6 & \textbf{90.5$\pm$2.1} & 58.3$\pm$2.1 \\
  & PGExp  &  &  &  &  & 59.5$\pm$2.1 & 44.1$\pm$2.1 & 84.5$\pm$2.1 & 52.4$\pm$2.1 & 82.1$\pm$9.5 & 50.0$\pm$9.4 \\
\midrule
\multirow{2}{*}{Synthie}
  & GNNExp & \multirow{2}{*}{39.7$\pm$5.5} & \multirow{2}{*}{35.3$\pm$5.0} & \multirow{2}{*}{47.4$\pm$5.5} & \multirow{2}{*}{44.2$\pm$4.2} & 42.9$\pm$4.0 & 37.8$\pm$3.3 & \textbf{51.3$\pm$5.5} & 44.2$\pm$4.2 & 46.8$\pm$3.9 & 40.4$\pm$6.8 \\
  & PGExp  &  &  &  &  & 44.9$\pm$5.0 & 37.8$\pm$3.3 & 46.2$\pm$3.1 & 43.6$\pm$7.4 & 46.2$\pm$0.0 & 40.4$\pm$4.2 \\
\midrule
\multirow{2}{*}{NCI1}
  & GNNExp & \multirow{2}{*}{94.3$\pm$1.7} & \multirow{2}{*}{66.3$\pm$0.2} & \multirow{2}{*}{91.4$\pm$0.3} & \multirow{2}{*}{65.7$\pm$0.3} & 88.9$\pm$1.0 & 65.4$\pm$0.3 & \textbf{96.0$\pm$0.2} & 66.3$\pm$0.2 & 92.2$\pm$2.8 & 65.9$\pm$0.6 \\
  & PGExp  &  &  &  &  & 84.8$\pm$1.3 & 63.5$\pm$0.7 & 95.2$\pm$0.5 & 66.5$\pm$0.2 & 92.6$\pm$2.6 & 65.9$\pm$0.8 \\
\midrule
\multirow{2}{*}{Tox21}
  & GNNExp & \multirow{2}{*}{60.7$\pm$5.1} & \multirow{2}{*}{54.8$\pm$6.1} & \multirow{2}{*}{54.8$\pm$3.4} & \multirow{2}{*}{48.8$\pm$3.4} & 54.8$\pm$3.4 & 48.8$\pm$3.4 & 89.3$\pm$2.9 & 83.3$\pm$4.5 & 77.4$\pm$9.4 & 76.2$\pm$7.3 \\
  & PGExp  &  &  &  &  & 57.1$\pm$5.8 & 51.2$\pm$3.4 & \textbf{92.9$\pm$2.9} & 82.1$\pm$2.9 & 91.7$\pm$1.7 & 80.9$\pm$1.7 \\
\bottomrule
\end{tabular}%
}
\caption{Extraction results on \textit{GCN} victim at matched training set sizes. Fid.\ = Fidelity (\%), Acc.\ = Accuracy (\%). Best fidelity per dataset in \textbf{bold}.}
\label{tab:main-gcn}
\end{table*}

In Table~\ref{tab:main-gcn} we present extraction results on GCN victims. While in Appendix, Figure~\ref{fig:bar-fidelity} visualizes fidelity across all datasets grouped by each explainer, and Figure~\ref{fig:fidelity-main} shows fidelity-vs-training-set-size trajectories on NCI1. Full results for GAT and GraphSAGE victims are in Appendix~\ref{app:gat-results}--\ref{app:sage-results}; trajectory plots across all architectures are in Appendix~\ref{app:trajectory}.

\begin{figure}[t]
    \centering
    \includegraphics[width=\textwidth]{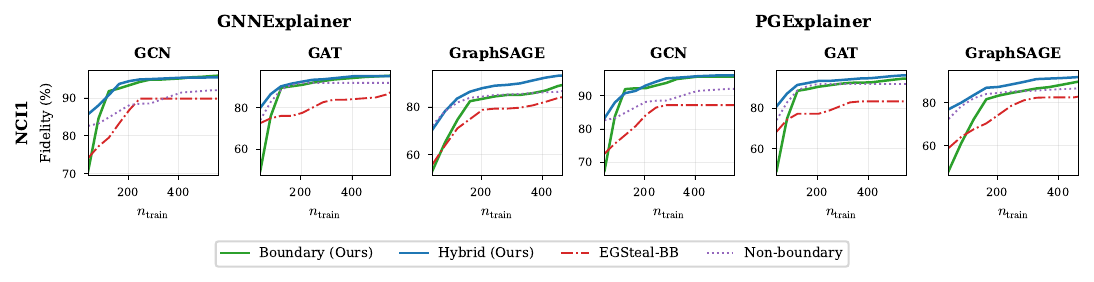}
    \caption{Fidelity (\%) across increasing training set size on NCI1.}
    \vspace{-1em}
    \label{fig:fidelity-main}
\end{figure}

\textbf{RQ1: Boundary sampling improves extraction fidelity.} On all datasets, Boundary or Hybrid achieves the highest fidelity on every dataset-explainer combination, with the largest gains where baselines struggle, such like Tox21 where Boundary (92.9\%) outperforms Shadow-Only (60.7\%) by 32.2 points, MUTAG (100.0\% vs.\ 81.8\%), and PTC\_FM (Hybrid 90.5\% vs.\ 77.4\%). We notice non-boundary consistently underperforms, confirming interior samples alone are insufficient. Additionally, Figure~\ref{fig:fidelity-main} shows Boundary and Hybrid separate from baselines early on NCI1 and converge to their advantage as training size grows. EGSteal-BB underperforms Shadow-Only on 5 of 8 datasets (GCN/GNNExplainer)---e.g., PTC\_FM: 59.5\% vs.\ 77.4\%. This notable insufficiency is consistent with its origin: EGSteal-BB is designed for white-box settings with continuous importance vectors, and our discrete adaptation necessarily strips its core alignment mechanism, leaving only style-invariance augmentation. This augmentation assumes the explanation mask causally partitions label-relevant from label-irrelevant structure; while under imprecise binary masks, this assumption breaks and the augmented samples introduce systematic label noise, which latter leads to additional fidelity loss in surrogate training (Appendix~\ref{app:egsteal-case-study}).

\textbf{RQ2: Generalization across architectures and explainers.} The outperforming pattern of our methods  holds across victim architectures: on GAT victims (Appendix Table~\ref{tab:main-gat}), Boundary achieves 97.6\% on Tox21 (GNNExplainer) and 95.5\% on MUTAG; on GraphSAGE (Appendix Table~\ref{tab:main-sage}), Boundary reaches 94.9\% on Tox21 and 93.3\% on PTC\_FM. Performance is consistent across both explainer types, with neither GNNExplainer nor PGExplainer systematically dominating. This robustness reflects a key design advantage: our method uses explanations \emph{heuristically} to narrow the candidate edge set rather than \emph{prescriptively} to assign labels. An imperfect explanation may degrade search \emph{efficiency} as we would explore a suboptimal region, but not \emph{correctness} since every boundary pair for surrogate training is verified by the victim's own predictions.

\begin{wrapfigure}{l}{0.5\textwidth}
    \centering
    \vspace{-10pt}
    \includegraphics[width=0.49\textwidth]{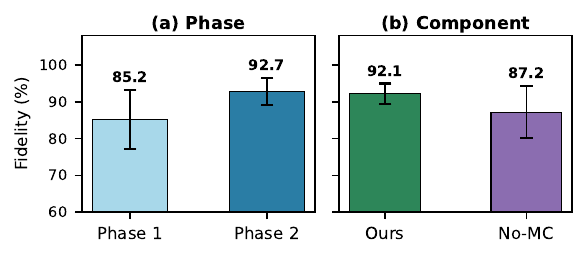}
    \vspace{-5pt}
    \caption{Phase and Component Ablations averaged across datasets and methods with error bars.}
    \label{fig:ablation-rq3}
    \vspace{-15pt}
\end{wrapfigure} 

\textbf{RQ3: Component contributions.} Figure~\ref{fig:ablation-rq3} summarizes both ablation studies. Phase~2 improves mean fidelity by 7.5 points over Phase~1 (92.7\% vs.\ 85.2\%) with substantially lower variance, confirming that explanation-guided MC search recovers boundary pairs that cheap exhaustive search misses---especially on low boundary-rate datasets. In the component study we evaluate No-Exp which draws candidate edges uniformly at random instead of the explanation subgraph and No-MC which uses a single sample to estimate each edge sensitivity. Under a budget-matched comparison (Appendix~\ref{app:matched-noexp}), explanation guidance improves fidelity over the No-Exp control on all three tested datasets, by $+10.7$, $+0.3$, and $+10.7$ points on PTC\_FM, NCI1, and Tox21\_AhR at matched training-set size with consistent gains at equal query budget. This confirms that the improvement comes from which candidates are selected rather than from additional computation. For the No-MC method, the component ablation (Figure~\ref{fig:ablation-rq3}b) shows that removing Monte Carlo sampling largely degrades its fidelity compared with ours (87.2\% vs 92.1\%). Moreover, we investigate No-MC's empirical details and found in most cases the boundary search is not able to find a boundary pair till exceeding the iteration limit. The lack of sufficient boundary samples thus further leads to the de-performance. This finding highlights the necessity and effectiveness of the Monte Carlo sampling.  Per-dataset breakdowns are in Appendix~\ref{app:component-ablations} and~\ref{sec:phase-ablation}.

\textbf{Multi-class performance.} On multi-class datasets (Letter-low, MNIST, Synthie), Hybrid consistently outperforms pure Boundary, as boundary search fragments across multiple decision surfaces and random coverage provides complementary distributional information. Absolute fidelity on MNIST ($55.1\%$) and Synthie ($51.3\%$) is notably lower than on binary tasks. However, these figures should be contextualized against the number of classes: with $10$ and $4$ classes respectively, uniform-chance fidelity is $10\%$ and $25\%$, making the achieved rates substantially above random agreement. While we notice that there is a performance gap between binary and multi-class settings, this observation is consistent with the fragmentation effect induced by our class-agnostic estimator (Section~\ref{subsubsec:grad-estimation}): in binary classification, all boundary pairs reinforce a single decision surface, while in multi-class settings the same query budget must be distributed across $\binom{C}{2}$ pairwise boundaries, diluting per-surface coverage. This analysis also suggests that scaling the query budget proportionally to the number of class boundaries is a promising direction for future work.

\section{Conclusion}
In this work we answer the question posed at the start of the paper affirmatively: subgraph explanations can be weaponized to steal graph neural networks. By recasting model extraction as a boundary pair search problem, we developed the first explanation-guided model extraction attack for GNN graph classification under strict black-box constraints. Our proposed Monte Carlo edge sensitivity estimation, backed by Hoeffding concentration guarantees, enables efficient boundary pair search by restricting estimated candidates to explanation-relevant subgraph regions. Extensive experiments across eight datasets, three victim architectures, and two explainer types demonstrate our attack's consistent superiority over adapted baselines, with fidelity gains of up to $32$ points on challenging datasets. Our baselines are necessarily adapted from methods built for other threat models since no prior attack runs under our constraints; we discuss this and further directions in Appendix~\ref{app:limitations}. These findings reveal a fundamental tension between regulatory demands for model transparency and the security of deployed GNN services, underscoring the need for defenses that can provide meaningful explanations without exposing decision boundary information.

\bibliography{arxiv}
\bibliographystyle{abbrvnat}

%%%%%%%%%%%%%%%%%%%%%%%%%%%%%%%%%%%%%%%%%%%%%%%%%%%%%%%%%%%%

\newpage
\appendix

\begin{center}
{\Large\textsc{Appendix Contents}}
\end{center}
\vspace{1em}
\startcontents[sections]
\printcontents[sections]{}{1}{\setcounter{tocdepth}{2}}

\newpage

\section{Related Work}
\paragraph{Model Extraction Attacks.} Model extraction attacks reconstruct target model functionality through query access. Early work demonstrated equation-solving attacks against simple models~\cite{10.5555/3241094.3241142}, later extended to deep neural networks~\cite{papernot2017practicalblackboxattacksmachine, 10.1145/3287560.3287562}. Recent approaches show that samples near decision boundaries maximize extraction efficiency~\cite{10.1145/3689932.3694756}. However, most methods assume access to probability scores or confidence values, assumptions violated in realistic deployments where APIs return only discrete labels. For GNNs specifically, existing attacks target node classification~\cite{10.5555/3698900.3699194, shen2021modelstealingattacksinductive}, exploiting shared graph topology and node-level embeddings. These methods fundamentally cannot transfer to graph classification, where each query involves an independent graph with only a graph-level prediction. Recent work has further explored both attack and defense strategies specific to GNN extraction~\cite{wang2025cegacosteffectiveapproachgraphbased,cheng2025atomframeworkdetectingquerybased}, though these also operate on node classification tasks. To our knowledge, no prior work addresses strict black-box model extraction for GNN graph classification with theoretical guarantees. Table~\ref{tab:positioning} positions representative extraction attacks and shows that only our setting operates on graph classification under discrete labels and discrete explanation masks. The closest methodological precedent is hard-label black-box evasion on graph classification~\cite{10.1145/3460120.3484796}. While it shares our hard-label query constraint and boundary-directed edge estimation, it solves a fundamentally different and simpler task: flipping one graph's own label; whereas our extraction task must reconstruct the victim's entire decision boundary by searching many boundary pairs to train a surrogate, further guided by explanations their setting does not use.

\paragraph{Explanation-Guided Attacks.} Regulatory requirements such as the EU AI Act~\cite{euai2024} increasingly mandate that MLaaS providers offer explanations, creating new attack surfaces. Methods like GNNExplainer~\cite{ying2019gnnexplainergeneratingexplanationsgraph}, PGExplainer~\cite{NEURIPS2020_e37b08dd}, and SubgraphX~\cite{yuan2021explainabilitygraphneuralnetworks} identify influential substructures for predictions. Recent work demonstrates that such explanations leak exploitable information to aid model extraction attacks~\cite{Oksuz_2024, ma2025explanationsleakdecisionlogic}. However, these approaches either require white-box access to explanation internals or assume continuous probability outputs. In realistic GMLaaS deployments, providers return only discrete explanation outputs (important nodes/edges) via APIs and not importance scores or gradients. Our work is the first to exploit explanations under strict black-box constraints where both predictions and explanations are discrete.

\paragraph{Defenses.} A natural mitigation is input-validity or out-of-distribution filtering: a provider that rejects queries whose structure falls off the data manifold blunts attacks that synthesize boundary graphs. This bears unevenly on our two phases: deletion-only Phase~1 removes existing edges and cannot introduce invalid substructure whereas Phase~2's toggle search can add edges that yield off-manifold graphs and is the more detectable stage. Such filtering trades coverage for security since legitimate rare inputs may also be rejected.

\begin{table}[H]
\centering
\resizebox{\textwidth}{!}{%
\begin{tabular}{@{}llll@{}}
\toprule
\textbf{Method} & \textbf{Task} & \textbf{Victim Output} & \textbf{Explanation Access} \\
\midrule
Node-clf GNN extraction~\cite{shen2021modelstealingattacksinductive, wu2021modelextractionattacksgraph, 10.5555/3698900.3699194} & Node clf. & Prob./embeddings & None \\
Explanation-guided extraction~\cite{Oksuz_2024, 10.1145/3665451.3665533, ma2025explanationsleakdecisionlogic} & Graph/node clf. & Continuous scores & Continuous (white-box) \\
Boundary-sample extraction~\cite{10.1145/3689932.3694756} & Non-graph & Probabilities & None \\
\textbf{XSTEAL (ours)} & Graph clf. & Discrete label & Discrete mask \\
\bottomrule
\end{tabular}%
}
\caption{Prior work by task, victim-output granularity, and explanation access. Only our setting operates on graph classification under discrete labels and discrete explanation masks.}
\label{tab:positioning}
\end{table}

\section{Limitations and Impact Statement}
\label{app:limitations}
While our work presents the first model extraction attack for GNN graph classification under strict black-box constraints, we acknowledge that no existing method operates under these same constraints. Therefore, due to the novelty of this work our baselines are necessarily adapted from methods designed for different threat models rather than direct competitors. We see several concrete directions this enables: native strict-black-box baselines for graph classification that do not require adapting white-box methods, extension to counterfactual and other explanation families, and joint structural-feature perturbations that generalize the attack beyond edge flips. Our findings demonstrate that subgraph explanations, increasingly mandated by regulations such as the EU AI Act, create exploitable attack surfaces for model extraction under strict black-box constraints. This work is intended to inform GMLaaS providers and regulators so that appropriate defenses can be developed.

\paragraph{Deployment Case Study: Toxicity Screening.} Regulatory transparency mandates increasingly reach high-risk classifiers: the EU AI Act~\cite{euai2024} and Colorado SB24-205~\cite{colorado_sb24205_2024} both require providers of such systems, including chemical toxicity screens, to return an explanation alongside each decision. Such a screen is a valuable target since it embeds proprietary assay data and substantial training cost, both of which a faithful surrogate lets a competitor bypass. To make this concrete, we instantiate our attack on Tox21\_AhR, where the adversary registers as an ordinary API user and holds a shadow set of public molecular graphs. On each query it observes only a discrete toxicity label and a binary explanation mask, with no access to gradients, logits, probabilities, or model parameters---precisely the interface the regulation obliges the provider to expose. Running the attack within a budget of 70\% of the shadow set (Appendices~\ref{app:data-splits} and~\ref{sec:query-complexity}), the resulting surrogate reaches 92.9\% fidelity to the victim, compared with 60.7\% for undirected shadow sampling at the same budget. The attack therefore succeeds using only the discrete outputs the mandate requires so the transparency requirement itself widens the very attack surface it is meant to make accountable.

\section{Boundary Sampling: Motivation and Theory}
\label{app:boundary-motivation}

\subsection{Empirical Validation}
\label{app:boundary-empirical}

\paragraph{Table Results.} Our results are outlined below, confirming the superiority of the boundary-sampling approach across all datasets.

\begin{table}[h]
\centering
\small
\begin{tabular}{@{}lcccc@{}}
\toprule
\textbf{Dataset} & \textbf{Boundary} & \textbf{Non-boundary} & \textbf{Hybrid} & \textbf{Shadow} \\
\midrule
AIDS & 86.2 & 71.6 & \textbf{91.4} & 88.8 \\
MUTAG & 95.5 & 81.8 & \textbf{100.0} & 86.4 \\
NCI1 & 94.1 & 91.1 & \textbf{95.3} & 95.0 \\
PTC\_FM & \textbf{82.1} & 71.4 & 71.4 & 57.1 \\
Tox21\_AhR & \textbf{89.3} & 50.0 & 75.0 & 50.0 \\
\bottomrule
\end{tabular}
\vspace{0.2em}
\caption{Fidelity (\%) on test sets with equal training set sizes. Best per dataset in \textbf{bold}.}
\label{tab:motivation-results}
\end{table}

\paragraph{Data Splits.} Each dataset is split 60\% victim training / 20\% shadow / 20\% test, stratified by class labels. The shadow set serves as the attacker's query pool; the test set is held out for fidelity evaluation.

\paragraph{Victim Models.} We train 3-layer GCN victim models via hyperparameter search over epochs $\in \{300, 500, 700, 1000\}$ and hidden dimensions $\in \{64, 128\}$, selecting the configuration with highest test accuracy. This reflects a realistic threat model: a model owner naturally deploys the best-performing model, and stronger victims represent a harder extraction target. Full configurations and test accuracies across all architectures are provided in Appendix Table~\ref{tab:victim_config_all}.

\paragraph{Boundary Detection.} For each graph $G$ in the shadow set, we exhaustively test all single-edge toggles ($\delta_e = 1$). Boundary detection statistics are reported in Table~\ref{tab:boundary-stats}.

\paragraph{Training Set Construction.} All four strategies (Boundary, Non-boundary, Hybrid, Shadow-Only) use equal-sized training sets, capped by the boundary pair count (the scarcest resource). Each boundary pair contributes two samples (one per class). Non-boundary and random sets are sampled to match. Hybrid uses 50\% boundary + 50\% random from the full shadow set.

\paragraph{Surrogate Training.} Surrogates use the same 3-layer GCN architecture (hidden dim 64). Hyperparameters: learning rate 0.001, no dropout, no weight decay, batch size 8, 500 epochs, Adam optimizer. Dropout is disabled because boundary pairs are near-identical graphs with opposite labels; dropout further obscures the subtle structural differences the model must learn.

\paragraph{Test Set Balancing.} Fidelity measures how well the surrogate replicates the victim's decision logic, not merely its majority-class tendency. We evaluate on a class-balanced subset of the test set (equal samples per victim-predicted class, fixed seed). Without balancing, a surrogate predicting only the majority class achieves artificially high fidelity. Balancing ensures fidelity reflects agreement on both sides of the decision boundary.

\paragraph{Boundary Detection Statistics.}
Table~\ref{tab:boundary-stats} reports single-edge boundary counts from the motivating experiment's toggling procedure ($\delta_e = 1$) across all datasets using GCN victims. Each boundary graph produces one pair (original + flipped) = 2 training samples. Low-rate datasets such as Tox21\_AhR rely heavily on Phase~2 for boundary pair generation.

\begin{table}[H]
\centering
\small
\begin{tabular}{@{}lcccc@{}}
\toprule
\textbf{Dataset} & \textbf{Shadow} & \textbf{Boundary} & \textbf{Non-Boundary} & \textbf{Rate} \\
\midrule
AIDS       & 400   & 127 & 273   & 31.8\% \\
MUTAG      & 38    & 11  & 27    & 28.9\% \\
NCI1       & 822   & 338 & 484   & 41.1\% \\
PTC\_FM    & 70    & 32  & 38    & 45.7\% \\
Tox21\_AhR & 1,634 & 69  & 1,565 & 4.2\%  \\
\bottomrule
\end{tabular}
\vspace{0.2em}
\caption{Single-edge boundary detection ($\delta_e = 1$, GCN victim). Counts are from the motivating experiment's \textit{toggling} procedure not Phase~1's deletion-only search.}
\label{tab:boundary-stats}
\end{table}

\paragraph{Toggle vs.\ Deletion Yield.} Table~\ref{tab:toggle-deletion} compares two single-edge detectors on the same shadow splits and GCN victims: toggling over all $O(|V|^2)$ node pairs (our motivating experiment) versus deletion over the $O(|E|)$ present edges (Phase~1). Insertions reach boundaries deletions cannot, and the gap widens where boundaries are scarce (Tox21\_AhR, $3.83\times$). Phase~1 trades this yield for its cheaper $O(|E|)$ cost; Phase~2 recovers the difference on low-yield datasets.
\begin{table}[H]
\centering
\small
\begin{tabular}{@{}lcccc@{}}
\toprule
\textbf{Dataset} & \textbf{Shadow graphs} & \textbf{Toggle (add/remove)} & \textbf{Deletion (remove only)} & \textbf{Ratio} \\
\midrule
AIDS       & 400     & 127 (31.8\%) & 80 (20.0\%)  & 1.59$\times$ \\
NCI1       & 822     & 338 (41.1\%) & 264 (32.1\%) & 1.28$\times$ \\
Tox21\_AhR & 1{,}634 & 69 (4.2\%)   & 18 (1.1\%)   & 3.83$\times$ \\
\bottomrule
\end{tabular}
\vspace{0.2em}
\caption{Single-edge boundary yield: toggling (add or remove) vs.\ deletion only, GCN victim. Toggle counts match Table~\ref{tab:boundary-stats}; insertions raise yield most where boundaries are scarce.}
\label{tab:toggle-deletion}
\end{table}

\subsection{Theoretical Analysis of Boundary Data Effectiveness }
\label{sec:boundary-proof}
We first record that the object our search seeks---a single-edge boundary pair---exists for an \textit{arbitrary} victim, with no continuity, smoothness, or Lipschitz assumption. The Lipschitz assumptions introduced afterward bound only how tightly such a pair constrains the surviving hypotheses, not whether it exists.

\begin{lemma}[Existence of single-edge boundary pairs]
\label{lem:boundary-existence}
Let $f:\mathcal{G}\to[C]$ be any victim classifier on graphs over a fixed node set $V$, with no assumption on $f$ beyond being a function. Suppose two graphs $G, G''$ on $V$ satisfy $f(G)\neq f(G'')$, and let $d = \|A(G)-A(G'')\|_0$ be their edge-edit distance. Then there exist a graph $H$ and a single edge $e$ with
\[
    \delta_e(G,H) < d,
    \qquad
    \delta_e(H, H\oplus e)=1,
    \qquad
    f(H)\neq f(H\oplus e),
\]
i.e., a single-edge boundary pair lying on the edit path between $G$ and $G''$.
\end{lemma}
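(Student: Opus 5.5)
The plan is a discrete intermediate-value (pigeonhole) argument along an edit path. Let $D = \{e_1,\dots,e_d\}$ be the set of unordered node pairs on which $A(G)$ and $A(G'')$ differ. Since $f(G)\neq f(G'')$, the two graphs are distinct, so $d\ge 1$. (Node features are shared on the fixed node set $V$, so only adjacency entries change.) Fix an arbitrary ordering of $D$ and define the path $H_0 = G$, $H_k = H_{k-1}\oplus e_k$ for $k=1,\dots,d$. Each step flips exactly one pair that has not yet been flipped, so $\delta_e(G,H_k)=k$, $\delta_e(H_{k-1},H_k)=1$, and $H_d = G''$. If one counts $\|\cdot\|_0$ over both symmetric entries, every quantity doubles consistently; I would fix the unordered-pair convention at the start so that a single flip has distance $1$, as in the statement.

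Next, I would locate the label change. Consider the label sequence $f(H_0),\dots,f(H_d)$. It starts at $f(G)$ and ends at $f(G'')\neq f(G)$. Let $k^\star = \min\{k\ge 1 : f(H_k)\neq f(H_{k-1})\}$. This index exists, because if every consecutive pair had equal labels, induction would give $f(H_d)=f(H_0)$, a contradiction. Now set $H = H_{k^\star-1}$ and $e=e_{k^\star}$. Then:
\begin{itemize}
\item $H\oplus e = H_{k^\star}$ and $f(H)\neq f(H\oplus e)$;
\item $\delta_e(H,H\oplus e)=1$;
\item $\delta_e(G,H)=k^\star-1\le d-1<d$.
\end{itemize}
The pair also lies on the edit path between $G$ and $G''$, which is the remaining claim. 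No property of $f$ beyond being a well-defined function on $\mathcal{G}$ is used, which is the point of the lemma.

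There is no genuine obstacle here; the only steps needing care are bookkeeping. The first is the distance convention (symmetric adjacency entries versus unordered edges), so that the strict inequality $<d$ and the value $1$ are stated in the same units. The second is confirming that each intermediate $H_k$ is a legitimate element of $\mathcal{G}$, i.e., a simple undirected graph on $V$. This holds because flipping pairs of a symmetric $0/1$ matrix with zero diagonal preserves that form. As a remark, the choice of minimal $k^\star$ even gives $f(H)=f(G)$, so $H$ can be viewed as a boundary point reached from $G$'s side. This could be noted to connect with the search in Eq.~\eqref{eq:boundary-stop}.
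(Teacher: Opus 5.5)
Your proposal is correct and takes essentially the same route as the paper: the paper also walks the hypercube edit path $G=H_0,\dots,H_d=G''$, toggling the differing coordinates one at a time, and picks an index $i$ with $f(H_{i-1})\neq f(H_i)$, so that $H=H_{i-1}$ satisfies $\delta_e(G,H)=i-1<d$. Your extra bookkeeping (the unordered-pair distance convention, holding node features fixed, and taking the minimal index so that $f(H)=f(G)$) makes explicit what the paper leaves implicit, without changing the argument.
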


\begin{proof}
Identifying each graph on $V$ with the upper-triangular part of its adjacency matrix, the graphs form the Boolean hypercube $\{0,1\}^{\binom{|V|}{2}}$ under single-edge toggles, in which any two vertices are joined by a path that toggles their differing coordinates one at a time. Toggling the $d$ edges on which $G$ and $G''$ differ yields a sequence $G=H_0, H_1,\dots,H_d=G''$ with $\delta_e(H_{i-1},H_i)=1$ for every $i$. The label sequence $f(H_0),\dots,f(H_d)$ starts at $f(G)$ and ends at $f(G'')\neq f(G)$, so some index $i$ satisfies $f(H_{i-1})\neq f(H_i)$. Setting $H=H_{i-1}$ and $e$ the edge toggled at step $i$ gives $f(H)\neq f(H\oplus e)$ with $\delta_e(G,H)=i-1<d$.
\end{proof}

Lemma~\ref{lem:boundary-existence} certifies that the boundary-pair search is never vacuous: whenever a shadow graph has any differently labeled graph within edit distance $d$, a single-edge (toggle) boundary pair exists inside that ball---exactly the object the motivating experiment and Phase~2 seek. The remaining results add an $L$-Lipschitz score assumption only to quantify how strongly a queried pair collapses the version space.

Let $\mathcal{H}$ be the hypothesis class of the extraction model mapping $\mathcal{X}\rightarrow\{-1,+1\}$. We assume the victim model $h^{*}\in \mathcal{H}$ and satisfies $L$-Lipschitz continuity. Given a set of graph data and corresponding query responses $\mathcal{G}_{T}=\{(G_{t},h^{*}(G_{t}))\}$, the victim space $\mathcal{V}_{T}\subseteq\mathcal{H}$ is the set of all hypotheses consistent with the predicted graph data:
$$\mathcal{V}_{T}=\{h\in\mathcal{H}|h(G_{t})=h^{*}(G_{t}), \forall t\in [1,\dots,T]\}$$

Besides, we define a pair of graph data $(G,G')$ as $\gamma$-boundary data if their distance satisfies $\delta_{e}+\delta_{f} =\gamma$ and they hold different queried predictions on the victim model, i.e., $h^{*}(G)\neq h^{*}(G')$. We also define the disagreement region $DIS(\mathcal{V}_{T})$ as the subset of the input graph space $\mathbb{G}$ where the surviving models in the victim space yield conflicting predictions:
$$DIS(\mathcal{V}_{T})=\{G\in\mathbb{G}|\exists h_{1},h_{2}\in\mathcal{V}_{T}:h_{1}(G)\neq h_{2}(G)\}$$
For each $h\in\mathcal H$, let $f_h:\mathbb G\to\mathbb R$ be its score
function and $h(G)=\operatorname{sign}(f_h(G))$. We assume every $f_h$ is
$L$-Lipschitz with respect to the graph distance
$d_{\mathbb G}(G,G')=\delta_e+\delta_f$.

\begin{lemma}[Boundary-pair margin constraint]
\label{lem:margin}
Let $(G,G')$ be a queried $\gamma$-boundary pair with
$d_{\mathbb G}(G,G')=\gamma$ and, without loss of generality,
$h^*(G)=+1$ and $h^*(G')=-1$. Suppose every hypothesis
$h\in\mathcal H$ is induced by an $L$-Lipschitz score function $f_h$.
Then any surviving hypothesis $h\in\mathcal V_{T+2}$ satisfies
\[
    0 < f_h(G) < L\gamma,
    \qquad
    -L\gamma < f_h(G') < 0 .
\]
Thus, for sufficiently small $\gamma$, every surviving hypothesis must place
its decision boundary inside an $O(\gamma)$ neighborhood between $G$ and $G'$.
\end{lemma}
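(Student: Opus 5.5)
The plan is to combine two facts: the sign constraints that come from consistency with the queried labels, and the Lipschitz bound on the score difference along the pair. Fix a surviving hypothesis $h\in\mathcal V_{T+2}$, meaning $h$ agrees with $h^*$ on all $T$ earlier queries and on the two new queries $G$ and $G'$. Since $h(G)=\operatorname{sign}(f_h(G))=h^*(G)=+1$ and $h(G')=h^*(G')=-1$, we obtain the one-sided bounds $f_h(G)>0$ and $f_h(G')<0$. Here we adopt the convention that $\operatorname{sign}(0)$ is not $\pm1$, or, equivalently, that a zero score is not a valid $\pm1$ prediction. This settles the strict inequalities $0<f_h(G)$ and $f_h(G')<0$.

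For the other side we use the Lipschitz hypothesis with respect to $d_{\mathbb G}$:
\[
|f_h(G)-f_h(G')|\le L\,d_{\mathbb G}(G,G')=L\gamma.
\]
Because $f_h(G)>0>f_h(G')$, the difference $f_h(G)-f_h(G')$ equals $f_h(G)+|f_h(G')|$, and both summands are strictly positive. It follows that
\[
f_h(G)=L\gamma\cdot\tfrac{f_h(G)}{f_h(G)-f_h(G')}\le L\gamma-|f_h(G')|<L\gamma .
\]
In the same way, $|f_h(G')|\le L\gamma-f_h(G)<L\gamma$, which gives $-L\gamma<f_h(G')$. Together these yield both displayed two-sided bounds.

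For the interpretive conclusion, note that both scores lie within $L\gamma$ of zero and have opposite signs. Hence the zero level set of $f_h$, which is the decision boundary, separates $G$ from $G'$. Moreover, any path of single-unit edits from $G$ to $G'$ of length $\gamma$ contains consecutive graphs at which the sign changes. This argument mirrors the proof of Lemma~\ref{lem:boundary-existence}, applied to $h$ instead of $f$. So every surviving hypothesis has its boundary within distance $\gamma=O(\gamma)$ of both endpoints.

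The main subtlety I expect concerns the strictness of the inequalities. The upper bound $f_h(G)<L\gamma$ is strict only because $f_h(G')$ is strictly negative, so I must make sure the sign convention excludes a zero score. If the paper allowed $\operatorname{sign}(0)=+1$, I would weaken the corresponding inequality to non-strict and note that the $O(\gamma)$ conclusion is unaffected.
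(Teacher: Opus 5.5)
Your argument is the same as the paper's: consistency with the two new labels gives $f_h(G)>0>f_h(G')$, and the Lipschitz bound $f_h(G)+|f_h(G')|=f_h(G)-f_h(G')\le L\gamma$ then forces $f_h(G)<L\gamma$ and $|f_h(G')|<L\gamma$, since the other summand is strictly positive. One slip to fix: the displayed identity $f_h(G)=L\gamma\cdot f_h(G)/(f_h(G)-f_h(G'))$ is not an equality in general. It holds only as $\le$, with equality exactly when $f_h(G)-f_h(G')=L\gamma$. It is cleaner to write $f_h(G)=(f_h(G)-f_h(G'))-|f_h(G')|\le L\gamma-|f_h(G')|<L\gamma$ directly.
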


\begin{proof}
Since $h\in\mathcal V_{T+2}$ is consistent with the newly queried labels,
we have $h(G)=h^*(G)=+1$ and $h(G')=h^*(G')=-1$. Hence
$f_h(G)>0$ and $f_h(G')<0$. By $L$-Lipschitz continuity,
\[
    |f_h(G)-f_h(G')|
    \leq
    L d_{\mathbb G}(G,G')
    =
    L\gamma .
\]
Because $f_h(G)>0$ and $f_h(G')<0$, we have
$f_h(G)-f_h(G')>f_h(G)$ and $f_h(G)-f_h(G')>-f_h(G')$.
Therefore,
\[
    0 < f_h(G) < L\gamma,
    \qquad
    -L\gamma < f_h(G') < 0 .
\]
\end{proof}
This lemma gives the local mechanism by which boundary pairs constrain the
version space: a tightly coupled pair with opposite victim labels forces every
surviving hypothesis to have scores close to the decision threshold at both
endpoints. Consequently, hypotheses whose decision boundary does not enter an
$O(\gamma)$ neighborhood of the queried pair become inconsistent with the
observed victim responses and are removed.

To give a further bound on extraction sample complexity, we define $B(h^{*},r)$ be the ball of hypotheses with an error rate of at most $r$,
\[
    B(h^*,r)
    =
    \left\{
        h\in\mathcal H:
        \mathbb P_{G\sim\mathcal D}
        \big(h(G)\neq h^*(G)\big)
        \leq r
    \right\}.
\]
and a disagreement coefficient $\theta_{h^{*}}(\epsilon)$ bounds the rate of version space collapse:
$$\theta_{h^{*}}(\epsilon)=\sup_{r\geq\epsilon}\frac{\mathbb{P}_{G\sim \mathcal{D}}(G\in DIS(B(h^{*},r)))}{r}$$

We assume that the boundary-pair sampler is non-degenerate in the following sense:
at each round, the queried $\gamma$-boundary pairs cover the current disagreement
region. More precisely, letting $A_T = DIS(\mathcal V_T)$, the marginal
distribution of queried boundary endpoints dominates the conditional
distribution $\mathcal D(\cdot \mid A_T)$ up to a universal constant. This
prevents the algorithm from repeatedly querying the same boundary pair and
ensures that boundary queries explore the informative part of the version
space.

\begin{theorem}[Sample Complexity of Boundary Extraction]
Assume realizability, $h^*\in\mathcal H$, and suppose the boundary-pair
sampler is non-degenerate: at epoch $k$, if
$A_k=DIS(\mathcal V_k)$, the endpoint distribution $Q_k$ of the queried
boundary pairs satisfies
\[
    Q_k(S)\geq c_0\,\mathcal D(S\mid A_k)
\]
for every measurable $S\subseteq A_k$ and for some constant $c_0>0$.
Then, with probability at least $1-\delta$, the number of queried endpoints
needed to obtain a hypothesis with generalization error at most $\epsilon$ is
\[
    N
    =
    O\!\left(
        \frac{\theta_{h^*}(\epsilon)}{c_0}
        \left(
            d_{VC}\log\frac{\theta_{h^*}(\epsilon)}{c_0}
            +
            \log\frac{\log(1/\epsilon)}{\delta}
        \right)
        \log\frac{1}{\epsilon}
    \right).
\]
In particular, when $c_0$ and logarithmic factors are suppressed,
\[
    N=\widetilde O\!\left(
        \theta_{h^*}(\epsilon)d_{VC}\log\frac{1}{\epsilon}
    \right).
\]
Counting boundary pairs instead of endpoints changes the bound only by a
constant factor.
\end{theorem}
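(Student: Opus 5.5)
This is the standard CAL/disagreement-based active learning analysis (Hanneke-style), with the non-degeneracy assumption replacing rejection sampling inside the disagreement region. I will run the argument in epochs $k=0,1,\dots,K$ with $K=\lceil\log_2(1/\epsilon)\rceil$. The invariant to maintain is that after epoch $k$, with high probability, every surviving hypothesis in $\mathcal V_{k+1}$ lies in $B(h^*,r_{k+1})$ with $r_{k+1}=2^{-(k+1)}$. Realizability, $h^*\in\mathcal H$, guarantees $h^*\in\mathcal V_k$ for all $k$, since labels are queried from $h^*$. So $\mathcal V_k$ is never empty, and version-space consistency is the only learning rule needed.

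Within epoch $k$ there are three steps.

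First, bound the region. The invariant gives $\mathcal V_k\subseteq B(h^*,r_k)$, so $A_k=DIS(\mathcal V_k)\subseteq DIS(B(h^*,r_k))$. The definition of the disagreement coefficient then gives $\mathcal D(A_k)\le \theta_{h^*}(\epsilon)\,r_k$, valid because $r_k\ge\epsilon$.

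Second, reduce to a passive problem. Every $h\in\mathcal V_k$ agrees with $h^*$ outside $A_k$, hence
\[
\mathrm{err}(h)=\mathcal D(A_k)\,\mathcal D(h\neq h^*\mid A_k).
\]
It therefore suffices to drive the conditional error on $A_k$ down to $r_{k+1}/\mathcal D(A_k)\ge 1/(2\theta_{h^*}(\epsilon))$. Here the non-degeneracy assumption $Q_k\ge c_0\mathcal D(\cdot\mid A_k)$ is used: for any $S\subseteq A_k$, $\mathcal D(S\mid A_k)\le Q_k(S)/c_0$. Conditional error at most $\eta$ is thus implied by $Q_k$-error at most $c_0\eta$.

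Third, apply the realizable VC bound. Under $Q_k$, consistent hypotheses reach error $c_0/(2\theta)$ with
\[
n_k=O\!\left(\frac{\theta}{c_0}\Bigl(d_{VC}\log\frac{\theta}{c_0}+\log\frac{1}{\delta_k}\Bigr)\right)
\]
endpoints, which we take to be i.i.d. from $Q_k$. Choosing $\delta_k=\delta/K$ and taking a union bound over epochs gives the $\log(\log(1/\epsilon)/\delta)$ term. Summing $n_k$ over the $K=O(\log(1/\epsilon))$ epochs gives the stated $N$. At the end, $\mathrm{err}\le r_K\le\epsilon$.

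Two details remain. Each boundary pair contributes exactly two endpoints, so counting pairs instead of endpoints changes $N$ by a factor of 2. Lemma~\ref{lem:margin} is not needed for the rate; it only motivates why endpoints concentrate in $A_k$.

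The main obstacle is the third step. The VC bound is usually stated for a distribution over which the consistent set is fixed. Here endpoints are data-dependent, pairs are correlated, and $Q_k$ may place mass outside $A_k$. I would handle this by treating the queried endpoints in epoch $k$ as i.i.d. draws from $Q_k$, given the history, with $\mathcal V_k$ fixed at the epoch's start. Labels outside $A_k$ are already determined and consistent, so that mass only wastes samples; the domination constraint is only needed on $A_k$. Applying the realizable bound to $\mathcal H$ restricted to $A_k$ under $Q_k$ then gives the result. If the i.i.d. assumption within a pair fails, I would subsample one endpoint per pair, which costs a constant factor.
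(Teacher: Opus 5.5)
Your proposal is correct and follows essentially the same route as the paper's proof: bound $\mathcal D(A_k)\le\theta_{h^*}(\epsilon)\,r_k$ via the disagreement coefficient, then use non-degeneracy to turn a conditional-error target of $1/(2\theta)$ on $A_k$ into a $Q_k$-measure target of $c_0/(2\theta)$, then apply the realizable VC ($\epsilon$-net) bound in each epoch with a union bound over $K=\lceil\log_2(1/\epsilon)\rceil$ epochs. The differences are cosmetic: you use the fixed schedule $r_k=2^{-k}$ and argue in the contrapositive (consistent hypotheses have small error), whereas the paper halves the data-dependent radius $r_T=\max_{h\in\mathcal V_T}\operatorname{err}(h)$ by showing that bad hypotheses are hit and eliminated; and you make explicit the within-epoch i.i.d.\ assumption on endpoints, which the paper uses only implicitly.
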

\begin{proof}
Let
\[
    \operatorname{err}(h)
    =
    \mathbb P_{G \sim \mathcal D}\big(h(G) \neq h^*(G)\big),
    \qquad
    r_T
    =
    \max_{h \in \mathcal V_T} \operatorname{err}(h).
\]
Since the realizability assumption gives $h^* \in \mathcal V_T$, for any
$h \in \mathcal V_T$ and any graph $G$ such that $h(G) \neq h^*(G)$, the graph
$G$ must belong to the disagreement region $DIS(\mathcal V_T)$. Therefore,
\[
    \{G : h(G) \neq h^*(G)\}
    \subseteq
    DIS(\mathcal V_T).
\]
Moreover, by definition of $r_T$, we have
$\mathcal V_T \subseteq B(h^*, r_T)$. Hence
\[
    DIS(\mathcal V_T)
    \subseteq
    DIS(B(h^*, r_T)).
\]
By the definition of the disagreement coefficient, for any $r_T \geq \epsilon$,
\[
    \mathbb P_{G \sim \mathcal D}
    \big(G \in DIS(\mathcal V_T)\big)
    \leq
    \mathbb P_{G \sim \mathcal D}
    \big(G \in DIS(B(h^*, r_T))\big)
    \leq
    \theta_{h^*}(\epsilon) r_T .
\]
Denote $\theta = \theta_{h^*}(\epsilon)$ and $A_T = DIS(\mathcal V_T)$.
Consider any hypothesis $h \in \mathcal V_T$ whose error is larger than
$r_T/2$. Since its disagreement set with $h^*$ is contained in $A_T$, we have
\[
\begin{aligned}
    \mathbb P_{G \sim \mathcal D}
    \big(h(G) \neq h^*(G) \mid G \in A_T\big)
    &=
    \frac{
        \mathbb P_{G \sim \mathcal D}
        \big(h(G) \neq h^*(G)\big)
    }{
        \mathbb P_{G \sim \mathcal D}(G \in A_T)
    }  
    \geq
    \frac{r_T/2}{\theta r_T}
    =
    \frac{1}{2\theta}.
\end{aligned}
\]
Thus, conditioned on the current disagreement region, every hypothesis with
error larger than $r_T/2$ disagrees with the victim model on at least a
$1/(2\theta)$ fraction of the informative region.

Now let $Q_T$ be the marginal distribution of queried boundary endpoints at
round $T$. By the non-degeneracy assumption, for any measurable
$S\subseteq A_T$,
\[
    Q_T(S)\geq c_0\,\mathcal D(S\mid A_T).
\]
For any hypothesis $h\in\mathcal V_T$ with
$\operatorname{err}(h)>r_T/2$, define
\[
    S_h=\{G:h(G)\neq h^*(G)\}.
\]
Since $S_h\subseteq A_T$, the previous inequality gives
\[
    Q_T(S_h)
    \geq
    c_0\,\mathcal D(S_h\mid A_T)
    \geq
    \frac{c_0}{2\theta}.
\]
Therefore, under the endpoint query distribution $Q_T$, every hypothesis with
error larger than $r_T/2$ disagrees with the victim on a set of probability at
least $c_0/(2\theta)$.

Apply the VC $\epsilon$-net theorem to the class
\[
    \mathcal A
    =
    \big\{
        \{G:h(G)\neq h^*(G)\}:h\in\mathcal H
    \big\},
\]
whose VC dimension is $O(d_{VC})$. A batch of
\[
    m
    =
    O\!\left(
        \frac{\theta}{c_0}
        \left(
            d_{VC}\log\frac{\theta}{c_0}
            +
            \log\frac{1}{\delta_T}
        \right)
    \right)
\]
queried endpoints forms an $\epsilon$-net for all sets in $\mathcal A$ of
$Q_T$-measure at least $c_0/(2\theta)$ with probability at least
$1-\delta_T$. Hence every surviving hypothesis with error larger than
$r_T/2$ is hit by at least one queried endpoint and is eliminated. Therefore,
after this batch,
\[
    r_{T+m}\leq \frac{r_T}{2}.
\]

Starting from $r_0 \leq 1$, after
\[
    K = \left\lceil \log_2 \frac{1}{\epsilon} \right\rceil
\]
epochs, the version-space radius satisfies $r_K \leq \epsilon$. Therefore the
total number of queried boundary endpoints is
\[
\begin{aligned}
    N
    &=
    \sum_{k=1}^{K}
    O\!\left(
        \theta_{h^*}(\epsilon)
        \left(
            d_{VC}\log \theta_{h^*}(\epsilon)
            +
            \log \frac{K}{\delta}
        \right)
    \right)  \\
    &=
    O\!\left(
        \theta_{h^*}(\epsilon)
        \left(
            d_{VC}\log \theta_{h^*}(\epsilon)
            +
            \log \frac{\log(1/\epsilon)}{\delta}
        \right)
        \log \frac{1}{\epsilon}
    \right).
\end{aligned}
\]
Suppressing logarithmic confidence factors and the mild
$\log \theta_{h^*}(\epsilon)$ term gives
\[
    N
    =
    O\!\left(
        \theta_{h^*}
        \cdot d_{VC}
        \log \frac{1}{\epsilon}
    \right).
\]
Since each $\gamma$-boundary pair contributes two labeled endpoints, counting
queries by boundary pairs only changes the bound by a constant factor. This
completes the proof.
\end{proof}

\section{Algorithms}
\label{app:algorithms}

\subsection{Overall Extraction Procedure}
\label{app:overview-alg}

Algorithm~\ref{alg:extraction} presents the complete XSTEAL procedure, combining Phase~1 (exhaustive search) and Phase~2 (explanation-guided MC search).
\begin{algorithm}[H]
\small
\caption{XSTEAL: eXplanation-guided STEALing}
\label{alg:extraction}
\begin{algorithmic}[1]
\REQUIRE Victim $f$, explainer $h$, shadow set $\mathcal{G}_{\mathrm{shadow}}$, edge threshold $\delta$, query budget $Q$
\ENSURE Boundary dataset $\bar{\mathcal{G}}$, surrogate $g$
\STATE $\bar{\mathcal{G}} \leftarrow \emptyset$, \quad $\mathcal{R} \leftarrow \emptyset$ \hfill // $\mathcal{R}$: non-boundary graphs
\STATE \textbf{// Phase 1: exhaustive search}
\FORALL{$G \in \mathcal{G}_{\mathrm{shadow}}$}
    \FORALL{$e \in E(G)$}
        \STATE $G' \leftarrow G \ominus e$
        \IF{$y(G') \neq y(G)$}
            \STATE $\bar{\mathcal{G}} \leftarrow \bar{\mathcal{G}} \cup \{(G, G')\}$; \quad \textbf{break}
        \ENDIF
    \ENDFOR
    \IF{no boundary found}
        \STATE $\mathcal{R} \leftarrow \mathcal{R} \cup \{G\}$
    \ENDIF
\ENDFOR
\STATE \textbf{// Phase 2: explanation-guided MC search}
%\IF{$Q>0$}
    \FORALL{$G \in \mathcal{R}$ \textbf{while} $Q >0$}
        \STATE $(G',Q_{c}) \leftarrow \textsc{BoundarySearch}(G, f, h, \delta)$ \hfill // Alg.~\ref{alg:boundary-search}
        \STATE $Q=Q-Q_{c}$
        \IF{$G' \neq \texttt{null} \wedge Q\geq0$}
            \STATE $\bar{\mathcal{G}} \leftarrow \bar{\mathcal{G}} \cup \{(G, G')\}$
        \ENDIF
    \ENDFOR
%\ENDIF
\STATE Train surrogate $g$ on $\{(\bar{G}, f(\bar{G})) : \bar{G} \in \bar{\mathcal{G}\}}$
\STATE \textbf{return} $\bar{\mathcal{G}}, g$
\end{algorithmic}
\end{algorithm}

\subsection{Boundary Search Subroutine}
\label{app:algorithm}

Algorithm~\ref{alg:boundary-search} provides the complete \textsc{BoundarySearch} subroutine referenced in Algorithm~\ref{alg:extraction}.

\begin{algorithm}[]
\small
\caption{\textsc{BoundarySearch}: Find boundary pair for a single graph}
\label{alg:boundary-search}
\begin{algorithmic}[1]
\REQUIRE Graph $G$, victim $f$, explainer $h$, edge threshold $\Delta_e$, max iterations $T_{\max}$, edges per step $k$, MC samples $n_e$, perturbation probability $p_{\mathrm{flip}}$, carry-over size $|\mathcal{S}_{\mathrm{carry}}|$, random exploration size $|\mathcal{S}_{\mathrm{rand}}|$
\ENSURE Boundary graph $G'$ and Query Cost $Q_{c}$
\STATE $G^{(0)} \leftarrow G$, \quad $y^{(0)} \leftarrow f(G)$, \quad $t \leftarrow 0$, \quad $\mathcal{S}_{\mathrm{carry}} \leftarrow \emptyset$
\WHILE{$t < T_{\max}$}
    \STATE $y^{(t)} \leftarrow f(G^{(t)})$
    \STATE $Q_{c} \leftarrow Q_{c}+1$
    \STATE $\delta_e \leftarrow \|A(G^{(0)}) - A(G^{(t)})\|_0$
    \IF{$y^{(t)} \neq y^{(0)}$ \textbf{and} $\delta_e \leq \Delta_e$}
        \STATE \textbf{return} $(G^{(t)},Q_{c})$
    \ENDIF
    \IF{$\delta_e \geq \Delta_e$}
        \STATE \textbf{break}
    \ENDIF
    \STATE $G_{\mathrm{sub}}^{(t)} \leftarrow h(G^{(t)}, y^{(t)})$ \hfill // query explainer
    \STATE Construct candidate edges $\mathcal{C}(G^{(t)})$ via Eq.~\eqref{eq:explain-candidates}
    \FORALL{$e \in \mathcal{C}(G^{(t)})$}
        \STATE Estimate $\hat{g}_e^{(t)}$ via Eq.~\eqref{eq:empirical-sensitivity} with $n_e$ samples
        \STATE $Q_{c}\leftarrow Q_{c}+n_{e}$
    \ENDFOR
    \STATE $S^{(t)} \leftarrow \operatorname{TopK}_{\max}(\{\hat{g}_e^{(t)} : \hat{g}_e^{(t)} > 0\}, k)$
    \IF{$S^{(t)} = \emptyset$}
        \STATE \textbf{break}
    \ENDIF
    \STATE $G^{(t+1)} \leftarrow$ flip edges in $S^{(t)}$ \hfill // apply Eq.~\eqref{eq:edge-flip} to each $e \in S^{(t)}$
    \STATE $\mathcal{S}_{\mathrm{carry}} \leftarrow$ top-$|\mathcal{S}_{\mathrm{carry}}|$ edges from $S^{(t)}$ by sensitivity
    \STATE $t \leftarrow t + 1$
\ENDWHILE
\STATE \textbf{return} (\texttt{null},$Q_{c}$)
\end{algorithmic}
\end{algorithm}

%% ============================================================
%% C: Experimental Details
%% ============================================================
\section{Experimental Details}
\label{app:hyperparameters}
We provide the details for our experiments here. The implementation code is available at: \url{https://github.com/LabRAI/XSTEAL/}.

\subsection{Datasets}
\label{app:datasets}

We evaluate on graph classification benchmarks from the TUDataset collection~\cite{morris2020tudatasetcollectionbenchmarkdatasets} and the GNN Benchmark suite~\cite{dwivedi2022benchmarkinggraphneuralnetworks}, with several datasets originating from the IAM Graph Database~\cite{10.1007/978-3-540-89689-0_33}. Table~\ref{tab:datasets} summarizes dataset statistics.

\begin{table}[h]
\centering
\small
\begin{tabular}{@{}lccccc@{}}
\toprule
\textbf{Dataset} & \textbf{\#Graphs} & \textbf{\#Classes} & \textbf{Avg. $|V|$} & \textbf{Avg. $|E|$} & \textbf{Features} \\
\midrule
AIDS & 2,000 & 2 & 15.7 & 16.2 & 38 \\
Letter-low & 2,250 & 15 & 4.7 & 3.1 & 2 \\
MNIST & 70,000 & 10 & 70.6 & 282.3 & 3 \\
MUTAG & 188 & 2 & 17.9 & 19.8 & 7 \\
PTC\_FM & 349 & 2 & 14.1 & 14.5 & 14 \\
Synthie & 400 & 4 & 95.0 & 172.9 & 15 \\
NCI1 & 4,110 & 2 & 29.9 & 32.3 & 37 \\
Tox21\_AhR & 8,169 & 2 & 18.1 & 18.5 & 50 \\
\bottomrule
\end{tabular}
\vspace{0.2em}
\caption{Dataset statistics.}
\label{tab:datasets}
\end{table}

\paragraph{Dataset Selection Rationale.}
Our benchmark suite spans three key dimensions of diversity:
\begin{itemize}[leftmargin=*,nosep]
    \item \textbf{Domain diversity:} AIDS, MUTAG, NCI1, and PTC\_FM represent molecular chemistry (drug discovery and toxicity screening), Tox21\_AhR encodes toxicological assay activity (regulatory safety testing), MNIST captures handwritten digit recognition via superpixel graphs~\cite{monti2016geometricdeeplearninggraphs} (computer vision), Letter-low captures handwritten letter recognition from the IAM Graph Database~\cite{10.1007/978-3-540-89689-0_33} (computer vision), and Synthie provides a synthetic benchmark with controlled properties and rich node attributes.
    \item \textbf{Classification complexity:} We include five binary classification datasets (AIDS, MUTAG, NCI1, PTC\_FM, Tox21\_AhR) where a single decision boundary allows all boundary pairs to reinforce the same decision surface, and three multi-class datasets (Synthie with 4 classes, MNIST with 10 classes, and Letter-low with 15 classes) to evaluate scalability of boundary search across increasingly complex decision boundary geometries.
    \item \textbf{Scale diversity:} Dataset sizes range from 188 graphs (MUTAG) to 70,000 graphs (MNIST), testing our method under both data-scarce and data-rich regimes. Graph sizes range from very small (Letter-low: $\sim$5 nodes, where single edge flips constitute $\sim$30\% structural change) to large dense graphs (MNIST: $\sim$71 nodes, $\sim$282 edges) and large sparse graphs (Synthie: $\sim$95 nodes), testing our method's efficiency across scales. Feature representations vary from rich atom-level attributes (AIDS: 38 features, Tox21: 50 features) to minimal spatial coordinates (Letter-low: 2 features) to superpixel descriptors (MNIST: 3 features), testing robustness to feature informativeness.
\end{itemize}

\subsection{Evaluation Metrics}
\label{app:metrics}

We report two primary metrics:
\begin{itemize}[leftmargin=*,nosep]
    \item \textbf{Fidelity}: Agreement rate between surrogate and victim
    predictions on a class-balanced test set:
    $\text{Fidelity} = \frac{1}{|\mathcal{D}_{\text{test}}|}
    \sum_{G \in \mathcal{D}_{\text{test}}} \mathbf{1}[g(G) = f(G)]$.
    \item \textbf{Accuracy}: Classification accuracy of the surrogate on
    ground-truth test labels.
\end{itemize}
Fidelity is our primary metric as it measures how faithfully the
surrogate replicates the victim's decision logic, independent of the
victim's own correctness.

\subsection{Victim Model Configuration}
\label{app:victim-config}

We employ three representative GNN architectures as victim models: Graph Convolutional Network (GCN), Graph Attention Network (GAT), and GraphSAGE. Each architecture consists of three message-passing layers followed by a global mean pooling operation and a linear classifier. We use ReLU activations and apply dropout ($p=0.5$) between layers for regularization. For each architecture-dataset combination, we perform a hyperparameter search over hidden dimensions $\in \{64, 128\}$ and training epochs $\in \{300, 500, 700, 1000\}$, selecting the configuration achieving highest test accuracy. All models are trained using the Adam optimizer with learning rate 0.001, weight decay 5e-4, and batch size 32. Table~\ref{tab:victim_config_all} summarizes the selected configurations and test accuracies across all architectures and datasets.

\begin{table*}[t]
\centering
\small
\begin{tabular}{@{}l|l|ccc@{}}
\toprule
\textbf{Architecture} & \textbf{Dataset} & \textbf{Hidden Dim} & \textbf{Selected Epochs} & \textbf{Test Accuracy (\%)} \\
\midrule
\multirow{8}{*}{\textbf{GCN}}
& AIDS              & 128 & 1000 & 87.0 \\
& Letter-low        & 128 & 1000 & 86.2 \\
& MNIST             & 128 & 200  & 38.1 \\
& MUTAG             & 64  & 300  & 78.9 \\
& PTC\_FM           & 64  & 500  & 65.7 \\
& Synthie           & 128 & 700  & 65.0 \\
& NCI1              & 64  & 1000 & 67.3 \\
& Tox21\_AhR        & 64  & 700  & 89.0 \\
\midrule
\multirow{8}{*}{\textbf{GAT}}
& AIDS              & 128 & 1000 & 88.5 \\
& Letter-low        & 64  & 700  & 98.0 \\
& MNIST             & 128 & 500  & 59.6 \\
& MUTAG             & 64  & 300  & 78.9 \\
& PTC\_FM           & 64  & 300  & 67.1 \\
& Synthie           & 128 & 300  & 75.0 \\
& NCI1              & 64  & 1000 & 69.6 \\
& Tox21\_AhR        & 64  & 700  & 89.5 \\
\midrule
\multirow{8}{*}{\textbf{GraphSAGE}}
& AIDS              & 128 & 500  & 90.5 \\
& Letter-low        & 128 & 700  & 99.3 \\
& MNIST             & 128 & 500  & 54.7 \\
& MUTAG             & 64  & 300  & 76.3 \\
& PTC\_FM           & 64  & 700  & 64.3 \\
& Synthie           & 128 & 1000 & 93.8 \\
& NCI1              & 128 & 1000 & 73.6 \\
& Tox21\_AhR        & 64  & 300  & 89.5 \\
\bottomrule
\end{tabular}
\caption{Victim model configurations. Best hidden dimension and epochs selected per dataset via grid search.}
\label{tab:victim_config_all}
\end{table*}

\subsection{Explainer Configuration}
\label{app:explainer-config}

PGExplainer is trained on the victim model for 100 epochs with learning rate 0.003. Explanations are thresholded at the median importance score to produce binary edge masks. GNNExplainer uses 100 optimization steps per instance with node masks (\texttt{node\_mask\_type='object'}). Both explainer types operate on a \texttt{deepcopy} of the victim model to prevent gradient hooks from corrupting victim predictions during the experiment (see Appendix~\ref{app:impl-notes}).

\subsection{Attack Configuration}
\label{app:attack-config}

Default hyperparameters for Phase~2 (explanation-guided MC search): $T_{\max} = 10$ (maximum iterations), $\delta_{\max} = 20$ (maximum edge edits), $k = 3$ (edges flipped per step), $n_e = 5$ (Monte Carlo samples per candidate edge), $p_{\mathrm{flip}} = 0.05$ (perturbation probability), $|\mathcal{S}_{\mathrm{carry}}| = 5$ (carry-over edges from previous iteration), $|\mathcal{S}_{\mathrm{rand}}| = 10$ (random exploration edges per iteration). Explanation masks are thresholded at the median importance score to select important nodes/edges.

\subsection{Surrogate Training}
\label{app:surrogate-training}

Surrogates use the same architecture family as the victim (GCN, GAT, or GraphSAGE) with fixed hyperparameters across all methods: hidden dimension 64, 3 layers, no dropout, learning rate 0.001, 500 epochs, batch size 8, Adam optimizer. Hyperparameters are deliberately held constant to isolate the effect of training data quality---the only variable across methods is how training samples are selected and generated. This reflects realistic black-box constraints where the attacker does not know the victim's exact configuration.

\subsection{Data Splits and Query Budget}
\label{app:data-splits}

Each dataset is split 60\% victim training / 20\% shadow / 20\% test, stratified by class labels. The shadow set serves as the attacker's query pool. For main results, we set the query budget to 70\% of the shadow set and match all methods to the same training set size, determined by boundary pair availability (the scarcest resource). Table~\ref{tab:ntrain} reports the matched $n_{\text{train}}$ for each dataset and victim architecture. This ensures fair comparison by isolating the effect of training data quality rather than quantity.

Boundary's actual training set size ($n_{\text{train}}$) may fall below the query budget when the shadow set contains insufficient boundary pairs. To ensure fair comparison, baseline methods are evaluated at the budget level yielding the closest $n_{\text{train}}$ to Boundary's output.

\subsection{Compute Resources}
\label{app:compute}
All experiments were conducted on Google Colab Pro+ using a single NVIDIA T4 GPU with high-RAM runtime. The full experimental pipeline (including victim training, boundary search, surrogate training, and evaluation across all datasets, architectures, and explainers) required approximately 600 Colab compute units.

\begin{table}[H]
\centering
\small
\begin{tabular}{@{}lccc@{}}
\toprule
\textbf{Dataset} & \textbf{GCN} & \textbf{GAT} & \textbf{GraphSAGE} \\
\midrule
AIDS       & 280 & 280 & 280 \\
Letter-low & 122 & 82  & 116 \\
MNIST      & 272 & 88  & 84  \\
MUTAG      & 26  & 26  & 26  \\
PTC\_FM    & 48  & 48  & 48  \\
Synthie    & 16  & 24  & 8   \\
NCI1       & 574 & 574 & 575 \\
Tox21\_AhR & 384 & 1,142 & 768 \\
\bottomrule
\end{tabular}
\vspace{0.5em}
\caption{Matched training set size ($n_{\text{train}}$) per dataset and victim architecture.}
\label{tab:ntrain}
\end{table}

\subsection{Implementation Notes}
\label{app:impl-notes}

\paragraph{Data Generation.}
Boundary and Hybrid are \emph{generative} methods: the flipped graph in each boundary pair is a synthetic sample created by edge perturbation, not an existing shadow graph. In contrast, Shadow-Only, Non-boundary, and EGSteal-BB only relabel existing graphs from the shadow set. This data augmentation around decision boundaries is a key mechanism behind the fidelity improvement, and enables our methods to generate informative training data even on very limited shadow datasets.

\paragraph{Explainer Model Isolation.}
PyTorch Geometric's explainer frameworks (GNNExplainer, PGExplainer) register forward hooks on the model that modify internal state in-place. Running an explainer on the victim model can corrupt its predictions for subsequent queries. We resolve this by creating a \texttt{copy.deepcopy} of the victim model for each explainer instantiation, ensuring the original victim remains unmodified throughout the experiment.

\paragraph{Test Set Balancing.}
Fidelity measures how well the surrogate replicates the victim's decision \emph{logic}---its learned mapping from graph structure to class label---not merely its majority-class tendency. We evaluate on a class-balanced subset of the test set. Without balancing, a surrogate predicting only the majority class achieves artificially high fidelity. Balancing ensures fidelity reflects agreement on both sides of the decision boundary, with single-class prediction scoring exactly 50\%.

\subsection{Baseline Details}
\label{app:baselines}

\paragraph{Shadow-Only.} Samples graphs from the shadow dataset and queries the victim for labels, with no explanation access or boundary-seeking strategy. This represents the standard model extraction baseline where training data is drawn from the attacker's available distribution without any targeted selection.

\paragraph{Non-boundary.} Trained exclusively on shadow graphs where no single-edge removal changes the victim's prediction (identified during Phase~1). These graphs lie in the interior of decision regions, far from boundaries. This baseline tests whether interior samples alone suffice for extraction, and serves as a direct contrast to boundary-based methods.

\paragraph{EGSteal-BB.} We adapt EGSteal~\cite{ma2025explanationsleakdecisionlogic} to our discrete setting. The original method requires continuous importance vectors $E \in \mathbb{R}^{|V|}$ for rank-based explanation alignment, which is infeasible under binary masks. We retain only its style-invariance augmentation: perturbing edges outside $G_{\text{sub}}$ while assigning the original victim label, generating 2 augmented samples per graph. This assumes $G_{\text{sub}}$ causally determines the prediction, so style perturbations are label-preserving. When explainers produce imprecise masks, this assumption breaks down and the augmented samples introduce label noise, which can degrade fidelity below naive random sampling (see case study in Appendix~\ref{app:egsteal-case-study}).

\paragraph{Boundary (XSTEAL).}
Our full two-phase method (Section~\ref{subsubsec:two-phase}). Phase~1 performs cheap exhaustive $\delta_e = 1$ edge removal search; Phase~2 triggers explanation-guided MC search when Phase~1 produces insufficient pairs. Both original and flipped graphs are added to the training set with their respective victim predictions. Boundary is a \emph{generative} method: the flipped graph in each pair is a synthetic sample created by edge perturbation, not an existing shadow graph.

\paragraph{Hybrid (XSTEAL).}
Allocates 50\% of the query budget to boundary pair generation and 50\% to random sampling. This combines the complementary strengths of both approaches: boundary pairs provide targeted information about the victim's decision logic near class transitions, while random samples provide broad coverage of the input distribution. Hybrid mitigates the sample limitation of pure Boundary on datasets with low boundary rates.

%% ============================================================
%% D: Additional Results
%% ============================================================
\section{Additional Results}
\label{app:additional}

We conduct ablations across three dimensions to assess robustness and isolate component contributions.

\paragraph{Victim Architecture Ablation.} We evaluate our method across all three victim architectures to assess generalization:
\begin{itemize}[leftmargin=*,nosep]
    \item GCN~\cite{kipf2017semisupervisedclassificationgraphconvolutional}: Graph Convolutional Networks (main text, Table~\ref{tab:main-gcn}).
    \item GAT~\cite{veličković2018graphattentionnetworks}: Graph Attention
    Networks with learned attention weights
    (Section~\ref{app:gat-results}).
    \item GraphSAGE~\cite{hamilton2018inductiverepresentationlearninglarge}:
    Sampling and aggregation-based architecture
    (Section~\ref{app:sage-results}).
\end{itemize}

\paragraph{Explainer Ablation.} We evaluate robustness to the explanation method deployed by the model owner by comparing two representative explainers:
\begin{itemize}[leftmargin=*,nosep]
    \item \textbf{GNNExplainer}~\cite{ying2019gnnexplainergeneratingexplanationsgraph}:
    Instance-level optimization-based explanations using node masks.
    \item \textbf{PGExplainer}~\cite{NEURIPS2020_e37b08dd}: Parameterized
    explainer trained across graphs using edge masks.
\end{itemize}
This ablation tests whether our method's gains are tied to a specific explanation format or generalize across explainer types. Performance differences between explainers reflect variation in the informativeness of leaked explanation structure rather than a limitation of our method.

\paragraph{Component Ablations.} We evaluate two ablations isolating key contributions of our method:
\begin{itemize}[leftmargin=*,nosep]
    \item \textbf{No-Exp}: Boundary search with random edge candidate selection instead of explanation-guided selection (Eq.~\ref{eq:explain-candidates}). Candidate edges are sampled uniformly at random from all $\binom{|V|}{2}$ possible node pairs, with the candidate-set size matched to the explanation-guided set per graph and per iteration, and both methods run at the default attack hyperparameters ($\delta_{\max} = 20$, $T_{\max} = 10$, $n_e = 5$, $k = 3$) so that only the candidate selection differs. This isolates the contribution of the explainer; full results are in Appendix~\ref{app:matched-noexp}.
    \item \textbf{No-MC}: Boundary search with deterministic
    single-sample estimation ($n_e = 1$, $p_{\text{flip}} = 0$) instead of
    Monte Carlo sampling ($n_e = 5$, $p_{\text{flip}} = 0.05$). This
    isolates the contribution of stochastic estimation.
\end{itemize}

\subsection{GAT Victim Results}
\label{app:gat-results}

\begin{table}[H]
\centering
\resizebox{\textwidth}{!}{%
\begin{tabular}{@{}ll|cc|cc|cc|cc|cc@{}}
\toprule
& & \multicolumn{2}{c|}{\textbf{Shadow}} & \multicolumn{2}{c|}{\textbf{Non-boundary}} & \multicolumn{2}{c|}{\textbf{EGSteal-BB}} & \multicolumn{2}{c|}{\textbf{Boundary (XSTEAL)}} & \multicolumn{2}{c}{\textbf{Hybrid (XSTEAL)}} \\
\textbf{Dataset} & \textbf{Expl.} & Fid. & Acc. & Fid. & Acc. & Fid. & Acc. & Fid. & Acc. & Fid. & Acc. \\
\midrule
\multirow{2}{*}{AIDS}
  & GNNExp & \multirow{2}{*}{86.9} & \multirow{2}{*}{76.1} & \multirow{2}{*}{68.2} & \multirow{2}{*}{72.2} & 67.0 & 61.9 & \textbf{89.8} & 75.6 & 86.9 & 72.7 \\
  & PGExp  &  &  &  &  & 64.2 & 62.5 & 88.6 & 77.8 & 85.8 & 75.0 \\
\midrule
\multirow{2}{*}{Letter-low}
  & GNNExp & \multirow{2}{*}{78.3} & \multirow{2}{*}{77.6} & \multirow{2}{*}{71.7} & \multirow{2}{*}{71.2} & 82.9 & 82.6 & 73.6 & 72.9 & 81.7 & 81.7 \\
  & PGExp  &  &  &  &  & 84.0 & 83.8 & 81.7 & 81.4 & \textbf{86.0} & 86.4 \\
\midrule
\multirow{2}{*}{MNIST}
  & GNNExp & \multirow{2}{*}{33.6} & \multirow{2}{*}{27.6} & \multirow{2}{*}{36.1} & \multirow{2}{*}{29.4} & 35.9 & 28.9 & 28.9 & 24.8 & 33.4 & 27.4 \\
  & PGExp  &  &  &  &  & 31.6 & 26.4 & 28.5 & 24.5 & \textbf{37.2} & 30.7 \\
\midrule
\multirow{2}{*}{MUTAG}
  & GNNExp & \multirow{2}{*}{81.8} & \multirow{2}{*}{68.2} & \multirow{2}{*}{86.4} & \multirow{2}{*}{63.6} & 86.4 & 72.7 & \textbf{95.5} & 72.7 & \textbf{95.5} & 72.7 \\
  & PGExp  &  &  &  &  & 90.9 & 68.2 & \textbf{95.5} & 72.7 & \textbf{95.5} & 72.7 \\
\midrule
\multirow{2}{*}{PTC\_FM}
  & GNNExp & \multirow{2}{*}{85.7} & \multirow{2}{*}{57.1} & \multirow{2}{*}{64.3} & \multirow{2}{*}{50.0} & 64.3 & 35.7 & 71.4 & 42.9 & \textbf{92.9} & 64.3 \\
  & PGExp  &  &  &  &  & 57.1 & 28.6 & 64.3 & 35.7 & 85.7 & 57.1 \\
\midrule
\multirow{2}{*}{Synthie}
  & GNNExp & \multirow{2}{*}{38.2} & \multirow{2}{*}{44.1} & \multirow{2}{*}{36.8} & \multirow{2}{*}{30.9} & 30.9 & 27.9 & 38.2 & 44.1 & 29.4 & 32.4 \\
  & PGExp  &  &  &  &  & 38.2 & 32.4 & 32.4 & 38.2 & \textbf{39.7} & 26.5 \\
\midrule
\multirow{2}{*}{NCI1}
  & GNNExp & \multirow{2}{*}{95.0} & \multirow{2}{*}{68.0} & \multirow{2}{*}{90.8} & \multirow{2}{*}{67.7} & 87.2 & 66.5 & 93.2 & 68.3 & \textbf{95.5} & 69.0 \\
  & PGExp  &  &  &  &  & 79.5 & 64.0 & \textbf{95.5} & 69.0 & 93.5 & 68.8 \\
\midrule
\multirow{2}{*}{Tox21}
  & GNNExp & \multirow{2}{*}{91.3} & \multirow{2}{*}{80.2} & \multirow{2}{*}{50.8} & \multirow{2}{*}{66.7} & 80.2 & 70.6 & \textbf{97.6} & 80.2 & 93.7 & 77.8 \\
  & PGExp  &  &  &  &  & 55.6 & 63.5 & 93.7 & 77.8 & 92.9 & 80.2 \\
\bottomrule
\end{tabular}%
}
\caption{Extraction results on \textit{GAT} victim at matched training set sizes. Fid.\ = Fidelity (\%), Acc.\ = Accuracy (\%). Best fidelity per dataset in \textbf{bold}.}
\label{tab:main-gat}
\end{table}

\subsection{GraphSAGE Victim Results}
\label{app:sage-results}

\begin{table}[H]
\centering
\resizebox{\textwidth}{!}{%
\begin{tabular}{@{}ll|cc|cc|cc|cc|cc@{}}
\toprule
& & \multicolumn{2}{c|}{\textbf{Shadow}} & \multicolumn{2}{c|}{\textbf{Non-boundary}} & \multicolumn{2}{c|}{\textbf{EGSteal-BB}} & \multicolumn{2}{c|}{\textbf{Boundary (XSTEAL)}} & \multicolumn{2}{c}{\textbf{Hybrid (XSTEAL)}} \\
\textbf{Dataset} & \textbf{Expl.} & Fid. & Acc. & Fid. & Acc. & Fid. & Acc. & Fid. & Acc. & Fid. & Acc. \\
\midrule
\multirow{2}{*}{AIDS}
  & GNNExp & \multirow{2}{*}{88.7} & \multirow{2}{*}{75.6} & \multirow{2}{*}{72.0} & \multirow{2}{*}{74.4} & 70.2 & 65.5 & 84.5 & 75.0 & 83.9 & 76.8 \\
  & PGExp  &  &  &  &  & 72.0 & 67.3 & 88.7 & 80.4 & \textbf{90.5} & 79.8 \\
\midrule
\multirow{2}{*}{Letter-low}
  & GNNExp & \multirow{2}{*}{\textbf{89.2}} & \multirow{2}{*}{89.0} & \multirow{2}{*}{\textbf{89.2}} & \multirow{2}{*}{89.0} & 86.2 & 85.8 & 88.5 & 88.3 & \textbf{89.2} & 89.0 \\
  & PGExp  &  &  &  &  & 88.7 & 88.5 & 85.8 & 85.5 & 81.6 & 81.6 \\
\midrule
\multirow{2}{*}{MNIST}
  & GNNExp & \multirow{2}{*}{35.6} & \multirow{2}{*}{26.2} & \multirow{2}{*}{35.0} & \multirow{2}{*}{24.3} & 37.6 & 26.6 & 30.9 & 21.0 & 35.2 & 26.3 \\
  & PGExp  &  &  &  &  & 35.9 & 25.8 & 28.4 & 19.4 & \textbf{39.5} & 28.7 \\
\midrule
\multirow{2}{*}{MUTAG}
  & GNNExp & \multirow{2}{*}{\textbf{100.0}} & \multirow{2}{*}{80.0} & \multirow{2}{*}{85.0} & \multirow{2}{*}{65.0} & 85.0 & 75.0 & \textbf{100.0} & 80.0 & \textbf{100.0} & 80.0 \\
  & PGExp  &  &  &  &  & 95.0 & 75.0 & \textbf{100.0} & 80.0 & \textbf{100.0} & 80.0 \\
\midrule
\multirow{2}{*}{PTC\_FM}
  & GNNExp & \multirow{2}{*}{73.3} & \multirow{2}{*}{60.0} & \multirow{2}{*}{63.3} & \multirow{2}{*}{63.3} & 53.3 & 46.7 & 83.3 & 50.0 & 90.0 & 56.7 \\
  & PGExp  &  &  &  &  & 53.3 & 46.7 & \textbf{93.3} & 53.3 & 86.7 & 53.3 \\
\midrule
\multirow{2}{*}{Synthie}
  & GNNExp & \multirow{2}{*}{26.3} & \multirow{2}{*}{28.9} & \multirow{2}{*}{23.7} & \multirow{2}{*}{25.0} & 28.9 & 28.9 & 19.7 & 19.7 & \textbf{32.9} & 36.8 \\
  & PGExp  &  &  &  &  & 28.9 & 28.9 & 21.1 & 19.7 & 21.1 & 25.0 \\
\midrule
\multirow{2}{*}{NCI1}
  & GNNExp & \multirow{2}{*}{90.2} & \multirow{2}{*}{72.1} & \multirow{2}{*}{89.2} & \multirow{2}{*}{71.4} & 86.2 & 69.8 & 77.6 & 63.2 & \textbf{91.4} & 72.1 \\
  & PGExp  &  &  &  &  & 84.5 & 67.0 & 90.4 & 72.0 & 91.0 & 71.4 \\
\midrule
\multirow{2}{*}{Tox21}
  & GNNExp & \multirow{2}{*}{85.9} & \multirow{2}{*}{74.4} & \multirow{2}{*}{50.0} & \multirow{2}{*}{59.0} & 55.1 & 56.4 & \textbf{94.9} & 83.3 & \textbf{94.9} & 80.8 \\
  & PGExp  &  &  &  &  & 50.0 & 59.0 & 80.8 & 71.8 & 93.6 & 79.5 \\
\bottomrule
\end{tabular}%
}
\caption{Extraction results on \textit{GraphSAGE} victim at matched training set sizes. Fid.\ = Fidelity (\%), Acc.\ = Accuracy (\%). Best fidelity per dataset in \textbf{bold}.}
\label{tab:main-sage}
\end{table}

\begin{figure}[H]
    \centering
    \includegraphics[width=\textwidth]{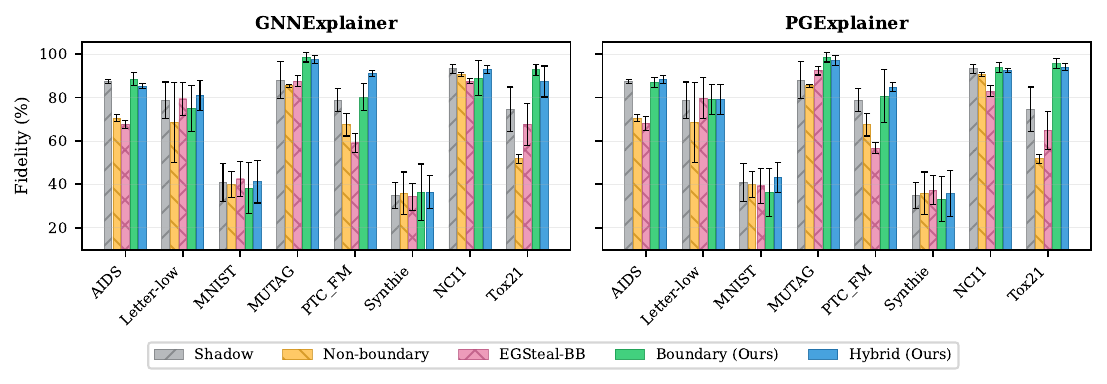}
    \caption{Extraction fidelity (\%) across all architectures and datasets, grouped by explainer (left: GNNExplainer, right: PGExplainer).}
    \label{fig:bar-fidelity}
\end{figure}

\subsection{Monte Carlo Sampling Ablation}
\label{app:component-ablations}

Table~\ref{tab:component-ablations} isolates the contribution of Monte Carlo sampling. The contribution of explanation guidance is isolated separately by the matched No-Exp control (Appendix~\ref{app:matched-noexp}).

\begin{table}[H]
\centering
\small
\begin{tabular}{@{}l|l|cc|cc|cc@{}}
\toprule
\textbf{Explainer} & \textbf{Method} & \multicolumn{2}{c|}{\textbf{AIDS}} & \multicolumn{2}{c|}{\textbf{NCI1}} & \multicolumn{2}{c}{\textbf{Tox21}} \\
 & & Fid. & Acc. & Fid. & Acc. & Fid. & Acc. \\
\midrule
\multirow{2}{*}{GNNExp.} & Boundary (XSTEAL) &{91.1} & {75.9} &{96.0} &{66.3} & {89.3} & {83.3} \\
 & No-MC & {77.6} & {68.1} & {94.7} &{66.5} & {89.3} & {78.6} \\
\bottomrule
\end{tabular}
\caption{Component ablation at 70\% budget (GCN victim). No-MC removes Monte Carlo sampling.}
\label{tab:component-ablations}
\end{table}

\subsection{Matched Queries No-Exp Control}
\label{app:matched-noexp}
To isolate the contribution of explanation guidance from raw computational budget, we compare Boundary against a No-Exp control that is identical except that candidate edges are drawn uniformly at random from all node pairs rather than from the explanation subgraph (Eq.~\ref{eq:explain-candidates}). Both methods run at the default attack hyperparameters with candidate-set size matched per graph and per iteration, so the two methods differ only in which node pairs are scored. We report fidelity both at matched training-set size $n_{\mathrm{train}}$ and at equal total victim query count $Q$.
\begin{table}[H]
\centering
\small
\begin{tabular}{@{}ll|c|c|c@{}}
\toprule
\textbf{Dataset} & \textbf{Budget} & \textbf{Boundary} & \textbf{No-Exp} & \textbf{Hybrid} \\
\midrule
\multirow{2}{*}{PTC\_FM}
 & matched $n_{\mathrm{train}}=60$   & 100.0 & 89.3 & 82.1 \\
 & matched $Q=24{,}184$              & 100.0 & 89.3 & 92.9 \\
\midrule
\multirow{2}{*}{NCI1}
 & matched $n_{\mathrm{train}}=560$  & 96.0  & 95.7 & 96.5 \\
 & matched $Q=203{,}150$             & 95.7  & 95.6 & 96.9 \\
\midrule
\multirow{2}{*}{Tox21\_AhR}
 & matched $n_{\mathrm{train}}=100$  & 96.4  & 85.7 & 82.1 \\
 & matched $Q=609{,}076$             & 96.4  & 85.7 & 100.0 \\
\bottomrule
\end{tabular}
\caption{Matched No-Exp control. Fidelity (\%). GCN victim, GNNExplainer, one seed, class-balanced test sets.}
\label{tab:matched-noexp}
\end{table}
Explanation guidance improves fidelity over the matched control on all three datasets under both accountings, with margins of $+10.7$, $+0.3$, and $+10.7$ points on PTC\_FM, NCI1, and Tox21\_AhR at matched $n_{\mathrm{train}}$. The near-tie on NCI1 is expected: its single-edge deletion yield in Phase~1 is the highest of the three so Phase~2 (only stage using explanations) supplies little of the training set and there is little room for guidance to help. On PTC\_FM and Tox21\_AhR, Phase~1 yields far fewer boundary pairs and Phase~2 does most of the work where guidance matters. This also explains the low Phase-2 success rate on Tox21\_AhR: the fraction of attempted graphs that yields a boundary pair is $0.098$ against $0.686$ on PTC\_FM and $0.590$ on NCI1, reflecting how scarce boundary pairs are on Tox21\_AhR rather than a weakness of guidance.

\subsection{Phase Ablation}
\label{sec:phase-ablation}

Table~\ref{tab:phase-ablation} compares Phase~1 alone (training only on $\delta_e = 1$ boundary pairs, capped by boundary availability) against Phase~2 alone (explanation-guided MC search on all shadow graphs) on the GCN victim for three datasets. Phase~2 outperforms Phase~1.

\begin{table}[H]
\centering
\small
\begin{tabular}{@{}l|cc|cc@{}}
\toprule
& \multicolumn{2}{c|}{\textbf{Boundary}} & \multicolumn{2}{c}{\textbf{Hybrid}} \\
\textbf{Dataset} & P1 & P2 & P1 & P2 \\
\midrule
AIDS     & 80.2 & \textbf{85.3} & 85.3 & \textbf{91.4} \\
NCI1     & 94.9 & \textbf{95.2} & 93.8 & \textbf{95.1} \\
Tox21\_AhR    & 71.4 & \textbf{96.4} & 85.7 & \textbf{92.9} \\
\bottomrule
\end{tabular}
\vspace{0.2em}
\caption{Phase ablation (GCN + GNNExp): Fidelity (\%) for Phase~1 alone vs.\ Phase~2 alone.}
\label{tab:phase-ablation}
\end{table}

\subsection{EGSteal-BB Failure Analysis}
\label{app:egsteal-case-study}
In our general evaluation we noticed that the performance of EGSteal-BB sometimes catastrophically fails, even worse than Shadow. After further analysis, we conclude that this fail is caused by two reasons. First we note the EGSteal-BB is an adapted version for fitting our black-box model settings, and therefore removed with a few components of it which relies on the white-box settings, largely reducing its applied performance. This performance decrease validates our statement on its lack of generalization on black-box settings. Besides, we also find a partial reason, the lack of generalizability of the stated assumption on the causal relationship between the original graph and the generated graph. Specifically, in EGSteal-BB, a generated graph containing the explanation subgraph of the original graph, is assumed to always hold the same prediction as the original graph, since the subgraph is the main cause for the prediction. However, this causal assumption is based on 100\% belief on a extremely accurate graph explainer, which is unlikely to happen in practice. To validate this statement, we pick out the generated graphs and compare the fidelity of its assigned label, i.e., if the victim model truly follow these labels. The results are provided as below:
\begin{table}[]
    \centering
    \begin{tabular}{c|c|c|c|c|c|c|c|c}
        \toprule
         Data.&  AIDS& MUTAG&NCI1&Tox21&PTC&Letter&Syn&MNIST\\
         \midrule
         Fidelity& 99.8\%&96.8\%&96.3\%&99.9\%&100.0\%&96.7\%&99.1\%&93.3\%\\
         \bottomrule
    \end{tabular}
    \caption{Fidelity of the generated training data by EGSteal-BB.}
    \vspace{-2em}
    \label{tab:egsteal-genfidelity}
\end{table}
We additionally note that all other baselines, including our method, are trained on 100\% fidelity sampled data on all datasets. Therefore, these not-fully-faithful generated data add more fidelity loss at the training stage, which leads to its worse performance compared with other baselines.

\section{Fidelity vs.\ Query Budget Trajectories}
\label{app:trajectory}

\begin{figure}[H]
    \centering
    \includegraphics[width=\textwidth]{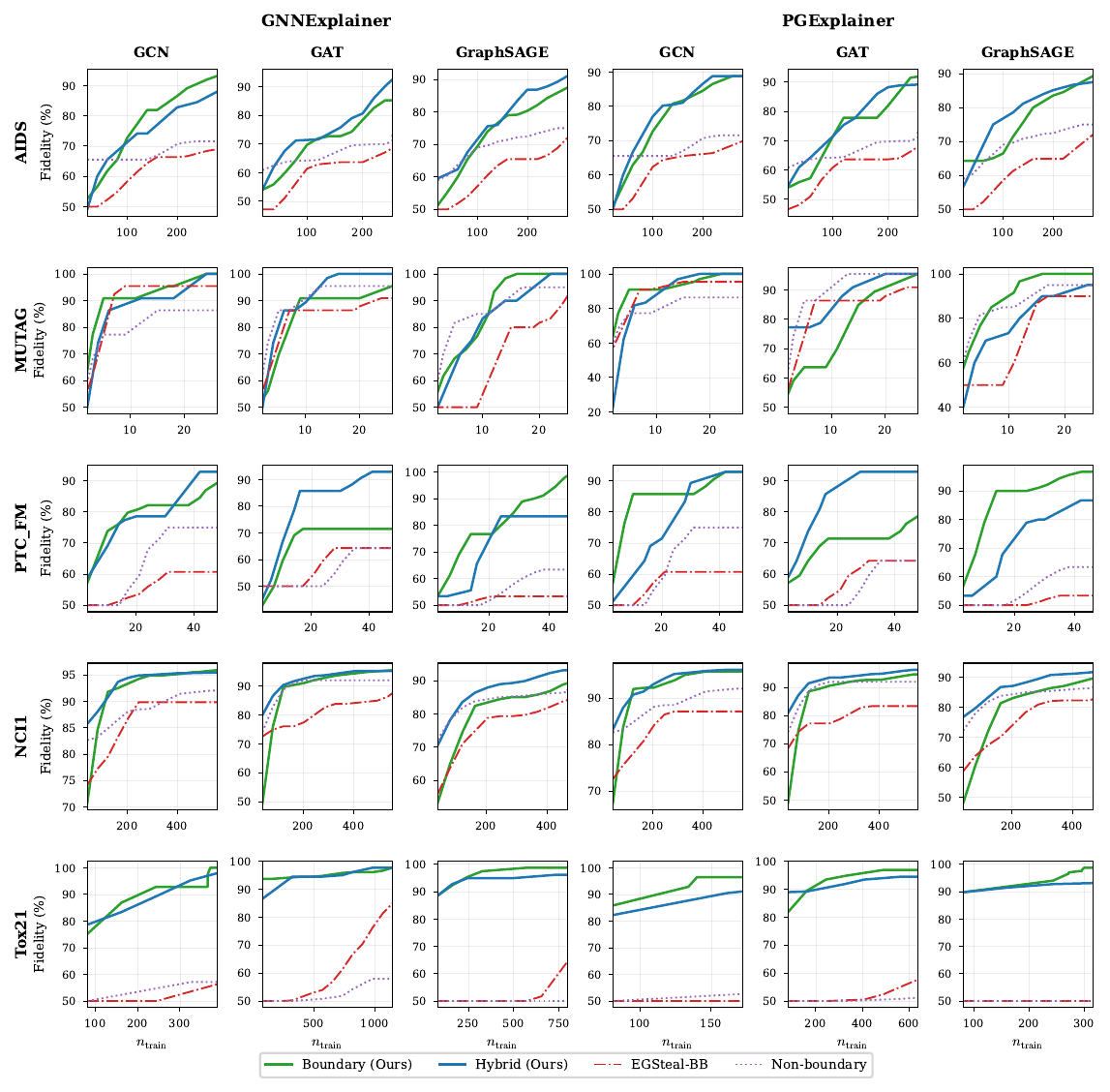}
    \caption{Fidelity (\%) vs.\ training set size across binary datasets, victim architectures, and explainers. Rows correspond to datasets; columns to victim architectures. Top half: GNNExplainer; bottom half: PGExplainer.}
    \label{fig:fidelity-trajectory}
\end{figure}

Figure~\ref{fig:fidelity-trajectory} shows surrogate fidelity as a function of training set size ($n_{\text{train}}$) across all binary datasets, three victim architectures (GCN, GAT, GraphSAGE), and both explainers (top: GNNExplainer, bottom: PGExplainer).

\paragraph{Setup.} For each dataset-architecture-explainer combination, we evaluate all methods at 14 training budgets from 5\% to 70\% of the shadow set in 5\% increments. At each budget level, $n_{\text{train}} = \lfloor |\mathcal{G}_{\text{shadow}}| \times \text{budget} \rfloor$ samples are drawn per method. Non-boundary does not use explanations; EGSteal-BB, Boundary, and Hybrid are evaluated under both GNNExplainer and PGExplainer. Fidelity trajectories apply a light smoothing (window size 3) for visualization clarity. All plots are x-axis-aligned: for each dataset-architecture-explainer triplet, the x-axis range is clipped to $[\max(\min n_{\text{train}}), \min(\max n_{\text{train}})]$ across methods to ensure fair visual comparison.

Several patterns emerge. First, Boundary (green) and Hybrid (blue) consistently dominate across training budgets, with the gap most pronounced at small $n_{\text{train}}$ as boundary pairs provide high-information samples that are especially valuable in low-data regimes. Second, on Tox21 (bottom row), the separation is dramatic: Boundary and Hybrid rise steeply while EGSteal-BB (red dashed) and Non-boundary (gray dotted) remain flat near chance, reflecting Tox21's low 4.2\% boundary rate where Phase~2's generative capacity is essential. Third, EGSteal-BB exhibits high variance and sometimes degrades with more data (e.g., AIDS/GAT, PTC\_FM/GCN), consistent with label noise from imprecise explanation masks compounding as augmented samples accumulate. Fourth, the patterns are consistent across GNNExplainer (top) and PGExplainer (bottom), confirming our method's robustness to explainer choice.

\section{Query Complexity Analysis}
\label{sec:query-complexity}

We analyze the number of black-box queries required to transform one initial
graph $G$ into a boundary pair $(G,G')$. Let $n=|V(G)|$, let $k$ denote the
number of edges flipped per update step, let $n_e$ denote the number of Monte
Carlo samples used to estimate each candidate edge sensitivity, and let
$\Delta_e$ be the maximum allowed edit distance between the initial graph and
the returned boundary graph. At iteration $t$, let
\[
    C_t = C(G^{(t)})
\]
be the explanation-guided candidate edge set, and define
\[
    C_{\max}
    =
    \max_{0\leq t<T_G} |C_t|,
\]
where $T_G$ is the number of update iterations needed before the algorithm
finds a boundary graph.

Since each iteration flips at most $k$ edges, the number of update iterations
is bounded by
\[
    T_G
    \leq
    \min\left\{
        T_{\max},
        \left\lceil \frac{\Delta_e}{k} \right\rceil
    \right\}.
\]
The second term is conditional: $\lceil \Delta_e / k \rceil = \lceil 20/3 \rceil = 7$ holds only if every iteration increases the edit distance by exactly $k$, which requires that no update revisits an already-flipped edge. Because the carry-over set $\mathcal{S}_{\mathrm{carry}}$ can reintroduce such an edge and edge flips are involutions, we rely on the unconditional bound $T_G \le T_{\max} = 10$.
More generally, if the first boundary crossing found by the search occurs at
edit distance $d_B(G)\leq \Delta_e$ from the initial graph, then
\[
    T_G
    \leq
    \left\lceil \frac{d_B(G)}{k} \right\rceil .
\]
Thus, the step size $k$ directly controls the number of search iterations:
larger $k$ reduces the number of rounds, while smaller $k$ yields finer-grained
boundary localization.

At each Phase-2 iteration, the algorithm issues a single API call on $G^{(t)}$ that
returns both the prediction $y(G^{(t)})$ and the explanation mask
$G_{\mathrm{sub}}^{(t)}$ together consistent with our threat model, followed by
Monte Carlo sensitivity estimation for all edges in $C_t$. For each
candidate edge $e\in C_t$, each Monte Carlo sample requires two victim
prediction queries, one for $\widetilde G_m^{(t)}$ and one for
$\widetilde G_m^{(t)}\oplus e$. Therefore, the number of victim prediction
queries in one Phase-2 iteration is
\[
    1 + 2 n_e |C_t|,
\]
where the leading term is the single label-and-mask call, so the explanation
incurs no separately-charged query.

Therefore, for a single graph entering Phase 2, the total number of victim
prediction queries is bounded by
\[
\begin{aligned}
    Q_{\mathrm{pred}}^{(2)}(G)
    &\leq
    1 + \sum_{t=0}^{T_G-1}
    \left(1 + 2 n_e |C_t|\right) \\
    &\leq
    1 + T_G\left(1 + 2 n_e C_{\max}\right) \\
    &=
    O\!\left(
        T_G n_e C_{\max}
    \right).
\end{aligned}
\]
Substituting the step-size bound gives
\[
    Q_{\mathrm{pred}}^{(2)}(G)
    =
    O\!\left(
        n_e C_{\max}
        \min\left\{
            T_{\max},
            \left\lceil \frac{\Delta_e}{k} \right\rceil
        \right\}
    \right).
\]
If the successful boundary distance is $d_B(G)$, this can be refined to
\[
    Q_{\mathrm{pred}}^{(2)}(G)
    =
    O\!\left(
        n_e C_{\max}
        \left\lceil \frac{d_B(G)}{k} \right\rceil
    \right).
\]
The explanation masks arrive with these calls rather than through separate queries: exactly $T_G$ of the prediction queries above also return a mask, so
the number of explanation responses is
\[
    Q_{\mathrm{exp}}^{(2)}(G)
    =
    T_G
    \leq
    \min\left\{
        T_{\max},
        \left\lceil \frac{\Delta_e}{k} \right\rceil
    \right\},
\]
and adds no cost beyond the queries already counted.

Using the explanation-guided candidate construction in Eq.~(9), we have
\[
    C_{\max}
    =
    O\!\left(
        s^2 + s d_{\mathrm{avg}} + c_{\mathrm{carry}} + c_{\mathrm{rand}}
    \right),
\]
where
\[
    s = \max_t |V_{\mathrm{sub}}^{(t)}|,
    \qquad
    c_{\mathrm{carry}} = |S_{\mathrm{carry}}|,
    \qquad
    c_{\mathrm{rand}} = |S_{\mathrm{rand}}|.
\]
Hence the Phase-2 prediction-query complexity becomes
\[
    Q_{\mathrm{pred}}^{(2)}(G)
    =
    O\!\left(
        n_e
        \left(
            s^2 + s d_{\mathrm{avg}}
            + c_{\mathrm{carry}}
            + c_{\mathrm{rand}}
        \right)
        \min\left\{
            T_{\max},
            \left\lceil \frac{\Delta_e}{k} \right\rceil
        \right\}
    \right).
\]
Without explanation-guided reduction, the candidate set could contain all
possible edges, giving $C_{\max}=O(n^2)$ and therefore
\[
    Q_{\mathrm{pred,naive}}^{(2)}(G)
    =
    O\!\left(
        n_e n^2
        \min\left\{
            T_{\max},
            \left\lceil \frac{\Delta_e}{k} \right\rceil
        \right\}
    \right).
\]
Thus, explanation guidance reduces the per-graph Phase-2 query complexity from
quadratic in the graph size to quadratic only in the explanation size.

Including Phase 1, which tests all single-edge removals, the worst-case number
of prediction queries for one initial graph is
\[
    Q_{\mathrm{pred}}(G)
    \leq
    1 + |E(G)|
    +
    \mathbb I_{\mathrm{P2}}(G)
    \cdot
    O\!\left(
        n_e C_{\max}
        \min\left\{
            T_{\max},
            \left\lceil \frac{\Delta_e}{k} \right\rceil
        \right\}
    \right),
\]
where $\mathbb I_{\mathrm{P2}}(G)=1$ if Phase 1 fails to find a boundary pair
and the graph is passed to Phase 2, and $\mathbb I_{\mathrm{P2}}(G)=0$
otherwise. Therefore, in the worst case,
\[
    Q_{\mathrm{pred}}(G)
    =
    O\!\left(
        |E(G)|
        +
        n_e
        \left(
            s^2 + s d_{\mathrm{avg}}
            + c_{\mathrm{carry}}
            + c_{\mathrm{rand}}
        \right)
        \min\left\{
            T_{\max},
            \left\lceil \frac{\Delta_e}{k} \right\rceil
        \right\}
    \right).
\]

This expression makes explicit the role of the step size $k$. Increasing $k$
decreases the number of search iterations roughly as $1/k$, thereby reducing
the query cost. However, larger $k$ also makes the search less local: the
returned boundary pair may lie farther from the initial graph, and the search
may overshoot narrow decision-boundary regions. Conversely, smaller $k$
requires more queries but gives finer control over the final boundary-pair
distance. In practice, $k$ therefore controls a trade-off between query
efficiency and boundary-pair locality.

\section{Selection Consistency of Top-$k$ Edge Selection}
\label{app:selection}
The boundary search (Eq.~\eqref{eq:topk-edges}) uses the estimated sensitivities $\{\hat{g}_e^{(t)}\}$ only to rank candidates and keep the top $k$, so its behavior is invariant to any estimation error that preserves the order of the leading candidates. The corollary below makes this precise: the estimator need only order edges whose true sensitivities are separated by a margin, and the required sample size depends on that margin rather than on the absolute accuracy bounded in Theorem~\ref{lem:hoeffding}.

\begin{corollary}[Selection consistency]
\label{cor:selection}
Fix iteration $t$ and two candidates $e, e' \in \mathcal{C}(G^{(t)})$ with $g_e^{(t)} - g_{e'}^{(t)} \ge \Delta > 0$. With $Z \in [-1,1]$ and $n_e$ Monte Carlo rounds per edge,
\[
    \Pr\!\left( \hat{g}_{e'}^{(t)} \ge \hat{g}_e^{(t)} \right)
    \;\le\;
    \exp\!\left( -\frac{n_e \Delta^2}{8} \right).
\]
Consequently, for confidence $1-s$, if
\[
    n_e \;\ge\; \frac{8}{\Delta^2}\,\log\!\frac{\binom{|\mathcal{C}(G^{(t)})|}{2}}{s},
\]
then with probability at least $1-s$ every pair of candidates whose true sensitivities differ by at least $\Delta$ is ordered correctly, and $\operatorname{TopK}_{\max}$ excludes no edge whose true sensitivity exceeds that of a selected edge by more than $\Delta$.
\end{corollary}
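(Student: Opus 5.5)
The plan is to reduce the ordering event for a single pair to a one-sided Hoeffding bound on a difference of estimators, and then take a union bound over pairs. Fix $t$ and the two candidates $e,e'$. I pair the Monte Carlo rounds and define $D_m = Z_{e,m}^{(t)} - Z_{e',m}^{(t)}$ for $m=1,\dots,n_e$, so that $\hat g_e^{(t)} - \hat g_{e'}^{(t)} = \frac{1}{n_e}\sum_m D_m$ and $\mathbb{E}[D_m] = g_e^{(t)} - g_{e'}^{(t)} \ge \Delta$. The rounds are drawn independently across $m$, exactly as in the proof of Theorem~\ref{lem:hoeffding}. Hence the $D_m$ are independent across $m$, whether or not the perturbations $\xi_m$ for $e$ and $e'$ are shared or drawn separately. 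Since each $Z\in[-1,1]$, each $D_m$ lies in $[-2,2]$, an interval of width $4$.

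Next I apply the one-sided Hoeffding inequality to the mean of the $D_m$:
\[
\Pr\!\left(\hat g_{e'}^{(t)} \ge \hat g_e^{(t)}\right) = \Pr\!\left(\tfrac{1}{n_e}\textstyle\sum_m D_m - \mathbb{E}[D_m] \le -(g_e^{(t)}-g_{e'}^{(t)})\right) \le \exp\!\left(-\frac{2n_e\Delta^2}{4^2}\right) = \exp\!\left(-\frac{n_e\Delta^2}{8}\right).
\]
This is the first claim. The only delicate point is the independence structure. If the implementation shares perturbations between different edges, the $D_m$ are still independent across rounds, so the bound survives. If instead the rounds are not independent across $m$, the argument fails. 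I regard justifying this independence from the sampling scheme as the main thing to check.

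For the uniform statement, I take a union bound over all unordered pairs in $\mathcal{C}(G^{(t)})$ whose true sensitivities differ by at least $\Delta$. There are at most $\binom{|\mathcal{C}(G^{(t)})|}{2}$ such pairs. Each pair is misordered with probability at most $\exp(-n_e\Delta^2/8)$, and the stated lower bound on $n_e$ makes this at most $s/\binom{|\mathcal{C}(G^{(t)})|}{2}$. So with probability at least $1-s$, every pair separated by $\Delta$ is ordered correctly.

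On this good event I argue the Top-$k$ consequence by contradiction. Suppose an edge $e$ is excluded by $\operatorname{TopK}_{\max}$ while a selected edge $e'$ satisfies $g_e^{(t)} - g_{e'}^{(t)} > \Delta$. Exclusion by the ranking step means $\hat g_{e'}^{(t)} \ge \hat g_e^{(t)}$, which is a misordered $\Delta$-separated pair and so contradicts the good event. One caveat concerns the positivity filter $\hat g_e^{(t)}>0$ in Algorithm~\ref{alg:boundary-search}. It can exclude $e$ for a reason unrelated to ordering, but only when $\hat g_{e'}^{(t)} \ge \hat g_e^{(t)}$ again, because $e'$ passed the filter. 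So the same contradiction applies, and the claim holds as stated.
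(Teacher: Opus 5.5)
Your proposal is correct and matches the paper's proof essentially step for step. Both form the per-round differences $Z_{e,m}^{(t)} - Z_{e',m}^{(t)} \in [-2,2]$, apply one-sided Hoeffding with range $4$ to get $\exp(-n_e\Delta^2/8)$, union-bound over at most $\binom{|\mathcal{C}(G^{(t)})|}{2}$ pairs, and deduce the Top-$k$ claim because any violation would require a reversed $\Delta$-separated pair. You also check that the positivity filter $\hat{g}_e^{(t)} > 0$ in Algorithm~\ref{alg:boundary-search} still forces $\hat{g}_{e'}^{(t)} \ge \hat{g}_e^{(t)}$ whenever $e$ is excluded and $e'$ is selected; the paper leaves this point implicit, and you handle it correctly.
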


\begin{proof}
Let $D = \hat{g}_e^{(t)} - \hat{g}_{e'}^{(t)} = \frac{1}{n_e}\sum_{m=1}^{n_e} U_m$ with $U_m = Z_{e,m}^{(t)} - Z_{e',m}^{(t)} \in [-2,2]$, independent across $m$ and with mean $\mathbb{E}[U_m] = g_e^{(t)} - g_{e'}^{(t)} \ge \Delta$. A misordering is the event $D \le 0$, which implies $D - \mathbb{E}[D] \le -\Delta$. Applying Hoeffding's inequality to $\sum_m U_m$ (each of range $4$),
\[
    \Pr(D \le \mathbb{E}[D] - \Delta)
    \;\le\;
    \exp\!\left( -\frac{2 (n_e \Delta)^2}{n_e \cdot 4^2} \right)
    \;=\;
    \exp\!\left( -\frac{n_e \Delta^2}{8} \right).
\]
Taking a union bound over the at most $\binom{|\mathcal{C}(G^{(t)})|}{2}$ candidate pairs and setting the right-hand side to $s / \binom{|\mathcal{C}(G^{(t)})|}{2}$ yields the sample size. The final claim holds because violating the Top-$k$ selection requires some excluded edge to outrank a selected one, i.e., a reversed $\Delta$-separated pair.
\end{proof}

Two consequences follow. A coarse estimator (small $n_e$) already selects correctly whenever the leading edges are well separated in true sensitivity, so accuracy in $\hat{g}_e^{(t)}$ is not needed where it does not change the ranking. And even a misordering never corrupts the output: every recorded pair is checked against the victim by Eq.~\eqref{eq:boundary-stop}, so a wrong ranking costs at most search efficiency, not correctness.

\end{document}